\theoremstyle{plain}
\newtheorem{theorem}{Theorem}[section]
\newtheorem{proposition}[theorem]{Proposition}
\theoremstyle{definition}
\newtheorem{assumption}[theorem]{Assumption}
\theoremstyle{remark}
\newtheorem{remark}[theorem]{Remark}
\def\onedot{.}
\def\eg{\emph{e.g}\onedot} 
\def\ie{\emph{i.e}\onedot}
\newcommand{\myargmin}{\operatornamewithlimits{argmin}}
\icmltitlerunning{Hierarchical Reinforcement Learning with Uncertainty-Guided Diffusional Subgoals}
\begin{document}

\twocolumn[
\icmltitle{Hierarchical Reinforcement Learning with Uncertainty-Guided \\Diffusional Subgoals}

\icmlsetsymbol{equal}{*}

\begin{icmlauthorlist}
\icmlauthor{Vivienne Huiling Wang}{equal,xxx}
\icmlauthor{Tinghuai Wang}{equal,comp}
\icmlauthor{Joni Pajarinen}{xxx}
\end{icmlauthorlist}

\icmlaffiliation{xxx}{Department of Electrical Engineering and Automation, Aalto University, Finland}
\icmlaffiliation{comp}{Huawei Helsinki Research Center, Finland}

\icmlcorrespondingauthor{Vivienne Huiling Wang}{vivienne.wang@aalto.fi}

\icmlkeywords{Reinforcement Learning}

\vskip 0.3in
]

\printAffiliationsAndNotice{\icmlEqualContribution}
\begin{abstract}
	Hierarchical reinforcement learning (HRL) learns to make decisions on multiple levels of 
    temporal abstraction. A key challenge in HRL is that the low-level policy changes over time, making it difficult for the high-level policy to generate effective subgoals. To address this issue, the high-level policy must capture a complex subgoal distribution while also accounting for uncertainty in its estimates.
    We propose an approach that trains a conditional diffusion model
    regularized by a Gaussian Process (GP) prior to generate a complex variety of subgoals
    while leveraging principled GP uncertainty quantification.
    Building on this framework, we develop a strategy that selects subgoals from both the diffusion policy and GP's predictive mean.
    Our approach outperforms prior HRL methods in both sample efficiency and performance on challenging continuous control benchmarks.
\end{abstract}

\section{Introduction}

In the domain of Hierarchical Reinforcement Learning (HRL), the strategy to simplify complex decision-making processes involves structuring tasks into various levels of temporal and behavioral abstractions. This strategy excels particularly in environments where the challenges include long-term credit assignment and sparse rewards, making it a promising solution for long-horizon decision-making problems. Among the prevailing HRL paradigms, goal-conditioned HRL has been extensively explored in various studies \citep{DayanH92, schmidhuber1993planning, kulkarni2016hierarchical, vezhnevets2017feudal, NachumGLL18, LevyKPS19, ZhangG0H020, li2020learning, kim2021landmark, li2023hierarchical,wang2023state,WangWYKP24}. Within this framework, a high-level policy decomposes the primary goal into a series of subgoals, effectively directing the lower-level policy to achieve these subgoals. The efficacy of this approach hinges on the ability to generate subgoals that are semantically coherent and achievable, providing a strong learning signal for the lower-level policies. The hierarchical structure not only enhances the learning process's efficiency but also considerably improves the policy's overall performance in solving complex tasks. 

The challenge with off-policy HRL lies in the simultaneous training of both high-level and low-level policies, which can lead to a changing low-level policy that causes past experiences in reaching previously achievable subgoals to become invalid. This issue requires the high-level policy to swiftly adapt its strategy to generate subgoals that align with the constantly shifting low-level skills. Previous works like HIRO \citep{NachumGLL18} and HAC \citep{LevyKPS19} have attempted to address this problem through a relabeling strategy that utilizes hindsight experience replay (HER) \citep{AndrychowiczCRS17}. This involves relabeling past experiences with high-level actions, \ie, subgoals, that maximize the probability of the past lower-level actions. Essentially, the subgoal that induced a low-level behavior in the past experience is relabeled so that it potentially induce the similar low-level behavior with the current low-level policy. However, the relabeling approach does not facilitate efficient training of the high-level policy to comply promptly with updates to the low-level policy. Some studies \citep{ZhangG0H020,kim2021landmark} have proposed that the problem of inefficient training is aggravated by the size of the subgoal space, and attempted to constrain it. While such constraints can improve performance in specific settings, they often fail to scale effectively to more complex environments.

To address these challenges, this work introduces a conditional diffusion model-based approach for subgoal generation \citep{sohl2015deep,song2021,ho2020denoising}. By directly modeling a highly expressive state-conditioned distribution of subgoals, our method offers a promising alternative to temporal difference learning, proving less susceptible to the destabilizing interactions inherent in the deadly triad \citep{sutton2018reinforcement} and more adept at handling stochastic environments. While diffusion models excel at generating complex distributions, they often demand substantial training data and lack explicit uncertainty quantification. To mitigate these limitations and regularize learning, we incorporate a Gaussian Process (GP) prior as an explicit surrogate distribution for subgoals. The GP prior facilitates more efficient learning of the diffusion model based subgoal generation while simultaneously providing uncertainty quantification, informing the diffusion process about uncertain state regions. 

Building upon this foundation, we further develop a subgoal selection strategy that combines the GP's predictive mean with subgoals sampled from a diffusion model, leveraging their complementary strengths. The GP's predictive mean, informed by state-subgoal pairs in the high-level replay buffer, aligns subgoals with feasible trajectories and ensures consistency with the underlying structure of the state space. This is achieved through the GP's kernel, which captures correlations between states and subgoals based on patterns learned from the training data. To complement this precision, the diffusion model introduces adaptability by generating diverse subgoals from its learned distribution. Together, these approaches form a hybrid strategy: the GP offers structured, data-driven guidance towards reliable, low-uncertainty subgoals, while the diffusion model provides flexibility, enabling robust subgoal generation that balances reliability with versatility.

Our main contributions are summarized as follows:
\begin{itemize}
	\item We introduce a conditional diffusion model for subgoal generation, directly modeling a state-conditioned distribution and reducing susceptibility to instability from temporal difference learning.
	\item We employ a Gaussian Process (GP) prior to regularize the diffusion model and explicitly quantify uncertainty, promoting more efficient learning and reliable subgoal generation.
	\item We introduce a subgoal selection strategy that combines the GP's predictive mean, which aligns subgoals with feasible trajectories and ensures structural consistency, with the diffusion model's expressiveness, resulting in robust and adaptive subgoal generation.
\end{itemize}

\section{Preliminaries}

\paragraph{Goal-conditioned HRL}

In reinforcement learning (RL), agent-environment interactions are modeled as a Markov Decision Process (MDP) denoted by $M = \langle \mathcal{S}, \mathcal{A}, \mathcal{P}, \mathcal{R}, \gamma \rangle$, where $\mathcal{S}$ represents the state space, $\mathcal{A}$ denotes the action set, $\mathcal{P}: \mathcal{S} \times \mathcal{A} \times \mathcal{S} \to [0, 1]$ is the state transition probability function, $\mathcal{R}: \mathcal{S} \times \mathcal{A} \to \mathbb{R}$ is the reward function, and $\gamma \in [0, 1)$ signifies the discount factor. A stochastic policy $\pi(a|s)$ maps any given state $s$ to a probability distribution over the action space, with the goal of maximizing the expected cumulative discounted reward $\mathbb{E}_{\pi}[\sum_{t=0}^{\infty} \gamma^t r_t]$, where $r_t$ is the reward received at discrete time step $t$.

In a continuous control RL setting, modeled as a finite-horizon, goal-conditioned MDP $M = \langle \mathcal{S}, \mathcal{G}, \mathcal{A}, \mathcal{P}, \mathcal{R}, \gamma \rangle$, where $\mathcal{G}$ represents a set of goals, we employ a Hierarchical Reinforcement Learning (HRL) framework comprising two layers of policy akin to \cite{NachumGLL18}. This framework includes a high-level policy $\pi_h(g|s)$ that generates a high-level action, or subgoal, $g_t \sim \pi_h(\cdot|s_t) \in \mathcal{G}$, every $k$ time steps when $t \equiv 0 \pmod{k}$. Between these intervals, a predefined goal transition function $g_t = f(g_{t-1}, s_{t-1}, s_t)$ is applied when $t \not\equiv 0 \pmod{k}$. The high-level policy influences the low-level policy through intrinsic rewards for achieving these subgoals. Following prior work \citep{AndrychowiczCRS17,NachumGLL18,ZhangG0H020,kim2021landmark}, we define the goal set $\mathcal{G}$ as a subset of the state space, \ie, $\mathcal{G} \subset \mathcal{S}$, and the goal transition function as $f(g_{t-1}, s_{t-1}, s_t) = s_{t-1} + g_{t-1} - s_t$. The objective of the high-level policy is to maximize the extrinsic reward as defined by $r_{t}^h = \sum_{i=t}^{t+k-1} R_i, \quad t = 0, 1, 2, \ldots$, where $R_i$ is the environmental reward.

The low-level policy aims to maximize the intrinsic reward granted by the high-level policy. It accepts the high-level action or subgoal $g$ as input, interacting with the environment by selecting an action $a_t \sim \pi_l(\cdot|s_t, g_t) \in \mathcal{A}$ at each time step. An intrinsic reward function, $r_t^l = -\|s_t + g_t - s_{t+1}\|_2$, evaluates the performance in reaching the subgoal $g$. 

This goal-conditioned HRL framework facilitates early learning signals for the low-level policy even before a proficient goal-reaching capability is developed, enabling concurrent end-to-end training of both high and low-level policies. However, off-policy training within this HRL framework encounters the non-stationarity problem as highlighted in Section 1. HIRO \citep{NachumGLL18} addresses this by relabeling high-level transitions $(s_t, g_t, \sum_{i=t}^{t+k-1} R_i, s_{t+k})$ with an alternate subgoal $\tilde{g}_t$ to enhance the likelihood of the observed low-level action sequence under the current low-level policy, by maximizing $\pi_l(a_{t:t+k-1}|s_{t:t+k-1}, \tilde{g}_{t:t+k-1})$.

The relabeled subgoals are generally considered to be sampled from a distribution that asymptotically approximates an optimal high-level policy within a stationary data distribution \citep{ZhangG0H020,wang2023state}. This leads to the conjecture that learning a conditional distribution based on the relabeled subgoals inherently facilitates the achievement of stationarity in the high-level policy.

\paragraph{Diffusion Model} 

Diffusion models \citep{sohl2015deep,song2021, ho2020denoising} have emerged as a powerful framework for generating complex data distributions. These models pose the data-generating process as an iterative denoising procedure $p_{\theta}(\mathbf{x}_{t-1} | \mathbf{x}_t)$. This denoising is the reverse of a diffusion or forward process which maps a data example  \( \mathbf{x}_0 \sim q(\mathbf{x}_0) \) through a series of intermediate variables $\mathbf{x}_{1:T}$ in \( T \) steps with a pre-defined variance schedule \( \beta_i \), according to
\begin{equation}
q(\mathbf{x}_{1:T}|\mathbf{x}_0) := \prod_{t=1}^{T} q(\mathbf{x}_t | \mathbf{x}_{t-1}), 
\end{equation}
\begin{equation}
q(\mathbf{x}_t | \mathbf{x}_{t-1}) := \mathcal{N}(\mathbf{x}_t; \sqrt{1-\beta_t} \mathbf{x}_{t-1}, \beta_t \mathbf{I}).
\end{equation}
The reverse process is constructed as \( p_{\theta}(\mathbf{x}_{0:T}) \) := \( \mathcal{N}(\mathbf{x}_T; 0, \mathbf{I}) \prod_{t=1}^{T} p_{\theta}(\mathbf{x}_{t-1} | \mathbf{x}_t) \). Parameters $\theta$ are optimized by maximizing a variational bound on the log likelihood of the reverse process  \( \mathcal{L}_{\text{ELBO}} = \mathbb{E}_{q(\mathbf{x}_{1:T} | \mathbf{x}_0)}[\log \frac{p_{\theta}(\mathbf{x}_{1:T} | \mathbf{x}_0)}{q(\mathbf{x}_{1:T} | \mathbf{x}_0)}] \). 

In this paper, we employ two distinct notations of timesteps: one for the diffusion process and another for the reinforcement learning trajectory. Specifically, we denote the diffusion timesteps with superscripts \( i \in \{1, \ldots, N\} \), and the trajectory timesteps with subscripts \( t \in \{1, \ldots, T\} \).

\section{Proposed Method}
In this section, we present our method --- \textbf{HI}erarchical RL subgoal generation with \textbf{DI}ffusion model (HIDI), which explicitly models the state-conditional distribution of subgoals at the higher level. By combining with the temporal difference learning objective, the high-level policy promptly adapts to generating subgoals following a data distribution compatible with the current low-level policy.

\subsection{Diffusional Subgoals}
We formulate the high-level policy as the reverse process of a conditional diffusion model as
\begin{equation}
	\label{eq:h_policy}
	\pi^h_{\theta_h}(\boldsymbol{g}|\boldsymbol{s}) \coloneqq p_{\theta_h}(\boldsymbol{g}^{0:N}|\boldsymbol{s}) = \mathcal{N}(\boldsymbol{g}^N;\boldsymbol{0},\boldsymbol{I})\prod_{i=1}^N p_{\theta_h}(\boldsymbol{g}^{i-1}|\boldsymbol{g}^{i},\boldsymbol{s}),\nonumber
\end{equation}
with the end sample $\boldsymbol{g}_i$ being the generated subgoal. Following 
\citet{ho2020denoising}, a Gaussian distribution $N(\boldsymbol{g}^{i-1};\mu_{\theta_h}(\boldsymbol{g}^i,\boldsymbol{s},i),\Sigma_{\theta_h}(\boldsymbol{g}^i,\boldsymbol{s},i))$ is used to model $p_{\theta_h}(\boldsymbol{g}^{i-1}|\boldsymbol{g}^{i},\boldsymbol{s})$, which is parameterized as a noise prediction model with learnable mean 
$$ \boldsymbol{\mu}_{\theta_h}\left(\boldsymbol{g}^i, \boldsymbol{s}, i\right)=\frac{1}{\sqrt{\alpha_i}}\left(\boldsymbol{g}^i-\frac{\beta_i}{\sqrt{1-\bar{\alpha}_i}} \boldsymbol{\epsilon}_{\theta_h}\left(\boldsymbol{g}^i, \boldsymbol{s}, i\right)\right) $$
and fixed covariance matrix 
$\boldsymbol{\Sigma}_{\theta_h}\left(\boldsymbol{g}^i, \boldsymbol{s}, i\right)=\beta_i \boldsymbol{I}$.

Starting with Gaussian noise, subgoals are then iteratively generated through a series of reverse denoising steps by the  noise prediction model parameterized by $\theta_h$ as
\begin{align}
\boldsymbol{g}^{i-1} &= \frac{1}{\sqrt{\alpha_i}}\left(\boldsymbol{g}^i - \frac{\beta_i}{1-\bar{\alpha}_i} \boldsymbol{\epsilon}_{\theta_h}(\boldsymbol{g}^i, s, i)\right) + \sqrt{\beta_i} \boldsymbol{\epsilon},  \nonumber \\
&\boldsymbol{\epsilon} \sim \mathcal{N}(\mathbf{0}, \boldsymbol{I}) ~\text{if}~ i>1, \text{else} ~\boldsymbol{\epsilon}=0.
\label{eq:reverse}
\end{align}

The learning objective of our subgoal generation model comprises three terms, \ie, diffusion objective $\mathcal{L}_{dm}(\theta_h)$, GP based uncertainty $\mathcal{L}_{gp}(\theta_h, \theta_{gp})$ and RL objective $\mathcal{L}_{dpg}(\theta_h)$:
\begin{align}
\pi_h &= \myargmin_{\theta_h} \mathcal{L}_d(\theta_h) \coloneqq \mathcal{L}_{dm}(\theta_h) + \psi \mathcal{L}_{gp}(\theta_h,\theta_{gp}) \nonumber \\
&+ \eta \mathcal{L}_{dpg}(\theta_h), \label{eq:lossfn}
\end{align}
where $\psi$ and $\eta$ are hyperparameters. 

We adopt the  objective proposed by \citet{ho2020denoising} as the diffusion objective,
\begin{align}
\mathcal{L}_{dm}^h(\theta_h) 
&= \mathbb{E}_{i \sim \mathcal{U}, \epsilon \sim \mathcal{N}(\boldsymbol{0}, \boldsymbol{I}),(\boldsymbol{s}, \boldsymbol{g}) \sim \mathcal{D}_h} \\
&\left[\left\|\boldsymbol{\epsilon}
-\epsilon_{\theta_h}\left(\sqrt{\bar{\alpha}_i} \boldsymbol{g}+\sqrt{1-\bar{\alpha}_i} \epsilon, \boldsymbol{s}, i\right)\right\|^2\right],
\label{eq:dmloss}
\end{align}
where $\mathcal{D}_h$ is the high-level replay buffer, with the subgoals relabeled similarly to HIRO. Specifically, relabeling $\boldsymbol{g}_t$ in the high-level transition $(\boldsymbol{s}_t, \boldsymbol{g}_t, \sum_{i=t}^{t+k-1} R_i, \boldsymbol{s}_{t+k})$ with $\tilde{\boldsymbol{g}}_t$ aims to maximize the probability of the incurred low-level action sequence $\pi_l(\boldsymbol{a}_{t:t+k-1}| \boldsymbol{s}_{t:t+k-1}, \tilde{\boldsymbol{g}}_{t:t+k-1})$, which is approximated by maximizing the log probability 	
\begin{align}
\textmd{log}\,\pi_l\!\bigl(\boldsymbol{a}_{t:t+k-1} \mid \boldsymbol{s}_{t:t+k-1}, \tilde{\boldsymbol{g}}_{t:t+k-1}\bigr) \nonumber\\
\propto
-\tfrac{1}{2}\sum_{i=t}^{t+k-1}
\bigl\|\boldsymbol{a}_i - \pi^l_{\theta_l}(\boldsymbol{s}_i, \tilde{\boldsymbol{g}}_i)\bigr\|_2^2
\;+\;\textmd{const.}
\label{eq:relabeling}
\end{align}
The purpose of the above diffusion objective is to align the high-level policy's behavior with the distribution of ``optimal'' relabeled subgoals, thereby mitigating non-stationarity in hierarchical models.

While our method is versatile enough to be integrated with various actor-critic based HRL frameworks, we specifically utilize the TD3 algorithm \citep{fujimoto2018addressing} at each hierarchical level, in line with HIRO \citep{NachumGLL18}, HRAC \citep{ZhangG0H020} and HIGL \citep{kim2021landmark}. Accordingly, the primary goal of the subgoal generator within this structured approach is to optimize for the maximum expected return as delineated by a deterministic policy. This objective is formulated as:
\begin{equation}
	\mathcal{L}_{dpg}(\theta_h) = -\mathbb{E}_{\mathbf{s} \sim \mathcal{D}_h, \mathbf{g}^0 \sim \pi_{\theta_h}} \left[Q_h (\mathbf{s}, \mathbf{g}^0) \right].
\end{equation}

Given that $\mathbf{g}^0$ (hereafter referred to as $\mathbf{g}$ without loss of generality) is reparameterized according to Eq.~\ref{eq:reverse}, the gradient of $\mathcal{L}_{dpg}$ with respect to the subgoal can be backpropagated through the entire diffusion process.

\subsection{Uncertainty Modeling with Gaussian Process Prior}

Whilst diffusion models possess sufficient expressivity to model the conditional distribution of subgoals, they face two significant challenges in the context of subgoal generation in HRL. Firstly, these models typically require a substantial amount of training data, \ie, relabeled subgoals, to achieve accurate and stable performance, which can be highly sample inefficient. Secondly, standard diffusion models lack an inherent mechanism for quantifying uncertainty in their predictions, a crucial aspect for effective exploration and robust decision-making in HRL.
To address these issues, we propose to model the state-conditional distribution of subgoals at the high level, harnessing a Gaussian Process (GP) prior as an explicit surrogate distribution for subgoals.

Given $\mathcal{D}_h$, the relabeled high-level replay buffer, a zero-mean Gaussian process prior is placed on the underlying latent function, i.e., the high-level policy, $\mathbf{g} \sim \pi_{\theta_h}$, to be modeled. This results in a multivariate Gaussian distribution over any finite subset of latent variables:
\begin{equation}
	\label{eq:gp1_main}
	p(\mathbf{g} | \mathbf{s}; \theta_{gp}) = \mathcal{N}(\mathbf{g} | \mathbf{0}, \mathbf{K}_N + \sigma^2 \mathbf{I}),
\end{equation}
where the covariance matrix $\mathbf{K}_N$ is constructed from a covariance function, or kernel, which expresses a prior notion of smoothness of the underlying function: $[\mathbf{K}_N]_{ij} = K(\mathbf{s}_i, \mathbf{s}_{j})$. Typically, the covariance function depends on a small number of hyperparameters $\theta_{gp}$, which control these smoothness properties. Without loss of generality, we employ the commonly used Radial Basis Function (RBF) kernel:
\begin{equation}
K(\mathbf{s}_i, \mathbf{s}_{j}) = \gamma^2 \exp \left[
-\frac{1}{2\ell^2} \sum_{d=1}^D \left( s_i^{(d)} - s_j^{(d)} \right)^2
\right],\nonumber
\end{equation}
\begin{equation}
\theta_{gp} = \{\gamma, \ell, \sigma \}.
\end{equation}
Here, $D$ is the state space dimension, $\gamma^2$ is the variance parameter, $\ell$ is the length scale parameter, and $\sigma^2$ is the noise variance. The hyperparameters $\theta_{gp}$ are learnable parameters of the GP model.

We leverage the GP prior to regularize and guide the optimization of the diffusion policy. Specifically, we incorporate the negative log marginal likelihood of the GP as an additional loss term $\mathcal{L}_{gp}$:
\begin{equation}
\begin{split}
\mathcal{L}_{gp} = & \mathbb{E}_{\mathbf{s} \sim \mathcal{D}_h, \mathbf{g} \sim \pi_{\theta_h}}
\left[- \log p(\mathbf{g}|\mathbf{s}; \theta_h, \theta_{gp})\right] \\
= & \mathbb{E}_{\mathbf{s} \sim \mathcal{D}_h, \mathbf{g} \sim \pi_{\theta_h}}
\Bigg[-\frac{1}{2}\mathbf{g}^\top (\mathbf{K}_N + \sigma^2 \mathbf{I})^{-1}\mathbf{g} \\
&\quad - \frac{1}{2}\log \big|\mathbf{K}_N + \sigma^2 \mathbf{I}\big|
\,- \frac{N}{2}\log 2\pi \Bigg].
\end{split}
\end{equation}
By minimizing $\mathcal{L}_{gp}$ alongside the diffusion loss and RL objective, we encourage the diffusion policy to generate subgoals that are consistent with the GP prior, particularly by focusing learning on feasible regions supported by previously achieved transitions. This approach not only regularizes the diffusion model but also incorporates the uncertainty estimates provided by the GP, potentially leading to more robust and sample-efficient learning.

To formalize the effect of the GP regularization on the diffusion-based subgoal policy, we provide the following theorem and proposition. The GP loss introduces a gradient signal that guides subgoal generation toward reliable regions identified by the GP posterior. Specifically:
\begin{theorem}[GP Regularization Guides Subgoal Alignment]
Let $\mathbf{g} = f(\boldsymbol{\epsilon}', \mathbf{s}; \theta_h)$ be the subgoal generated by the diffusion model conditioned on state $\mathbf{s}$ and noise $\boldsymbol{\epsilon}'$. Under mild regularity assumptions, the GP regularization term in the high-level objective encourages the learned distribution $p_{\theta_h}(\mathbf{g}|\mathbf{s})$ to align with the GP predictive mean $\mu_*(\mathbf{s})$ in regions of low predictive variance $\sigma_*^2(\mathbf{s})$. 
\end{theorem}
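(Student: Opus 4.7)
The plan is to reduce the GP regularization term to an explicit quadratic penalty on $\mathbf{g}$ involving the predictive moments $(\mu_*(\mathbf{s}), \sigma_*^{2}(\mathbf{s}))$, and then analyse its gradient with respect to $\theta_h$ through the reparameterized sampler $\mathbf{g} = f(\boldsymbol{\epsilon}', \mathbf{s}; \theta_h)$. First, I would isolate the contribution of a generated pair $(\mathbf{s}, \mathbf{g})$ to $\mathcal{L}_{gp}$ by treating the remaining entries of $\mathcal{D}_h$ as fixed observations and applying the standard Gaussian conditioning identity to the joint covariance $\mathbf{K}_N + \sigma^{2}\mathbf{I}$. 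This yields the GP predictive distribution at $\mathbf{s}$, namely $p(\mathbf{g}\mid \mathbf{s}, \mathcal{D}_h) = \mathcal{N}(\mu_*(\mathbf{s}), \sigma_*^{2}(\mathbf{s}))$ with the usual closed-form expressions for $\mu_*$ and $\sigma_*^{2}$.

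Given this decomposition, the $\mathbf{g}$-dependent part of $-\log p(\mathbf{g}\mid \mathbf{s}; \theta_h, \theta_{gp})$ collapses to
\begin{equation*}
\frac{\lVert \mathbf{g} - \mu_*(\mathbf{s})\rVert^{2}}{2\sigma_*^{2}(\mathbf{s})} + C(\mathbf{s}, \theta_{gp}),
\end{equation*}
with $C$ independent of $\theta_h$. Differentiating through the reparameterization $\mathbf{g} = f(\boldsymbol{\epsilon}', \mathbf{s}; \theta_h)$ gives
\begin{equation*}
\nabla_{\theta_h}\mathcal{L}_{gp} = \mathbb{E}\!\left[\frac{\mathbf{g} - \mu_*(\mathbf{s})}{\sigma_*^{2}(\mathbf{s})}\, \nabla_{\theta_h} f(\boldsymbol{\epsilon}', \mathbf{s}; \theta_h)\right],
\end{equation*}
which is a precision-weighted pull toward the predictive mean: the effective gradient magnitude scales as $1/\sigma_*^{2}(\mathbf{s})$, so in low-variance regions this term dominates and drives $\mathbf{g}$ toward $\mu_*(\mathbf{s})$, while in high-variance regions the pull vanishes and the diffusion and RL objectives retain control. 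The mild regularity assumption I would invoke is that $f$ is continuously differentiable in $\theta_h$ with integrable gradient, so that expectation and differentiation can be interchanged; any stationary point of the full objective then satisfies a weighted moment-matching condition that forces alignment exactly where $\sigma_*^{2}(\mathbf{s})$ is small.

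The main obstacle I anticipate is formalising ``alignment'' rigorously. The diffusion policy defines a distribution over $\mathbf{g}$ rather than a single value, so the GP term can at best concentrate this distribution around $\mu_*(\mathbf{s})$ with spread on the order of $\sigma_*^{2}(\mathbf{s})/\psi$ rather than collapse it to a point mass. I would address this by showing that, holding the diffusion and RL losses fixed, the first-order optimality condition implies $\mathbb{E}_{\boldsymbol{\epsilon}'}[f(\boldsymbol{\epsilon}', \mathbf{s}; \theta_h)] \to \mu_*(\mathbf{s})$ as $\sigma_*^{2}(\mathbf{s}) \to 0$, with a quantitative bound on the residual in terms of $\psi/\sigma_*^{2}(\mathbf{s})$ and the Lipschitz constant of $f$. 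A secondary technical point is that the loss as written involves the joint marginal likelihood over all buffer points; I would need to verify, via the leave-one-out conditional decomposition sketched above, that the per-sample stochastic gradient actually used in training reduces cleanly to the predictive form stated here, which is standard for GPs but deserves an explicit check in this mini-batched setting.
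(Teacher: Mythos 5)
Your proposal is correct and takes essentially the same route as the paper's proof in Appendix A (Theorem on the Guiding Effect of GP Regularization): both differentiate the negative log of the Gaussian GP predictive density through the reparameterized sampler $\mathbf{g}=f(\boldsymbol{\epsilon}',\mathbf{s};\theta_h)$, obtain the precision-weighted residual $\bigl(\mathbf{g}-\mu_*(\mathbf{s})\bigr)/\sigma_*^2(\mathbf{s})$ contracted with $\nabla_{\theta_h}\mathbf{g}$, and conclude that the pull toward $\mu_*(\mathbf{s})$ strengthens as $\sigma_*^2(\mathbf{s})$ shrinks. The two points where you go beyond the paper --- reconciling the joint marginal-likelihood form of $\mathcal{L}_{gp}$ in the main text with the per-sample predictive form via leave-one-out Gaussian conditioning, and formalizing ``alignment'' as a first-order stationarity condition with a quantitative residual --- are gaps the paper silently elides (it simply restates $\mathcal{L}_{gp}$ in predictive form in the appendix and stops at the gradient interpretation), so addressing them is a strengthening rather than a deviation.
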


We further quantify the mechanism through which this alignment occurs:
\begin{proposition}[Gradient Weighting by GP Uncertainty]
Let $\mathbf{g} = f(\boldsymbol{\epsilon}', \mathbf{s}; \theta_h)$ be the subgoal generated by the diffusion model using the reparameterization trick. Then the gradient of the GP loss with respect to $\theta_h$ satisfies:
\[ 
\nabla_{\theta_h} L_{gp} = \mathbb{E}_{\mathbf{s}, \boldsymbol{\epsilon}'}\left[ \left( \frac{\mathbf{g} - \mu_*(\mathbf{s})}{\sigma_*^2(\mathbf{s})} \right)^\top \nabla_{\theta_h} \mathbf{g} \right],
\]
where $\mu_*(\mathbf{s})$ and $\sigma_*^2(\mathbf{s})$ denote the GP predictive mean and variance. This implies that the GP regularization applies stronger gradient pressure for parameter updates in state regions $\mathbf{s}$ where the GP is more confident (i.e., $\sigma_*^2(\mathbf{s})$ is small).
\end{proposition}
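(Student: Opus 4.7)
The plan is to derive the claimed identity by starting from the negative log predictive density of the GP, applying the reparameterization trick, exchanging gradient and expectation, and then computing the gradient of a Gaussian log-likelihood in closed form. The only nontrivial aspect is interpreting $L_{gp}$ as the (per-point) negative log predictive probability $-\log p(\mathbf{g}\mid\mathbf{s},\mathcal{D}_h)$, which given the stationarity of $\mathcal{D}_h$ at the time of the update, is a Gaussian with mean $\mu_*(\mathbf{s})$ and variance $\sigma_*^2(\mathbf{s})$; under this reading the proposition is essentially bookkeeping.

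First I would write out the predictive density explicitly,
\begin{equation*}
-\log p(\mathbf{g}\mid\mathbf{s},\mathcal{D}_h) \;=\; \tfrac{1}{2\sigma_*^2(\mathbf{s})}\bigl\|\mathbf{g}-\mu_*(\mathbf{s})\bigr\|^2 \;+\; \tfrac{1}{2}\log\bigl(2\pi\sigma_*^2(\mathbf{s})\bigr),
\end{equation*}
and emphasize that $\mu_*(\mathbf{s})$ and $\sigma_*^2(\mathbf{s})$ depend on $\mathbf{s}$ and $\theta_{gp}$ but are independent of $\theta_h$, because they are computed from the (frozen within a gradient step) replay-buffer observations via the GP posterior formulas. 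This is the key observation that makes the derivative clean: only $\mathbf{g}$ carries a $\theta_h$-dependence.

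Next I would substitute the reparameterization $\mathbf{g}=f(\boldsymbol{\epsilon}',\mathbf{s};\theta_h)$ so that the expectation is over a $\theta_h$-independent noise distribution, allowing $\nabla_{\theta_h}$ to move inside. Then I would differentiate the quadratic term by the chain rule, obtaining $\bigl((\mathbf{g}-\mu_*(\mathbf{s}))/\sigma_*^2(\mathbf{s})\bigr)^\top \nabla_{\theta_h}\mathbf{g}$; the log-determinant term vanishes under $\nabla_{\theta_h}$ since $\sigma_*^2$ does not depend on $\theta_h$. Taking the expectation then yields exactly the identity in the statement. I would also remark on the multi-dimensional generalization (replacing the scalar variance by the predictive covariance $\boldsymbol{\Sigma}_*(\mathbf{s})$ and using its inverse), so that the transpose notation in the statement is justified.

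Finally I would interpret the result: since $\nabla_{\theta_h}\mathbf{g}$ is multiplied by $1/\sigma_*^2(\mathbf{s})$, the contribution to the update scales inversely with the GP's predictive variance at $\mathbf{s}$, so states with low GP uncertainty exert a large pull of $\mathbf{g}$ toward $\mu_*(\mathbf{s})$ while states with large uncertainty contribute little, yielding the promised uncertainty-weighted gradient. The main obstacle, such as it is, is purely expository: articulating clearly that the GP-predictive form (rather than the marginal-likelihood form written earlier in the paper) is the operative one at evaluation time, and that $\mu_*,\sigma_*^2$ must be treated as constants of $\theta_h$ under the swap of $\nabla$ and $\mathbb{E}$; once this is said, the computation is a two-line exercise.
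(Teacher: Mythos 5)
Your proposal is correct and follows essentially the same route as the paper's own proof (Theorem~\ref{thm:gp_guiding} in Appendix~\ref{sec:appendix_gp_proof}): reparameterize $\mathbf{g}=f(\boldsymbol{\epsilon}',\mathbf{s};\theta_h)$, exchange gradient and expectation, apply the chain rule, and use the closed-form gradient of the isotropic Gaussian negative log predictive density, with $\mu_*(\mathbf{s})$ and $\sigma_*^2(\mathbf{s})$ held fixed with respect to $\theta_h$. Your explicit remark that the operative loss is the GP \emph{predictive} negative log-density rather than the marginal-likelihood form stated earlier in the main text matches the redefinition the paper itself makes in Eq.~\eqref{eq:gp_loss_in_proof}, so there is no substantive difference.
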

The detailed derivation and full proofs are provided in Appendix~\ref{sec:appendix_gp_proof}.

While the GP framework provides explicit uncertainty quantification, standard GP training requires $\mathcal{O}(N^3)$ computation to invert the covariance matrix, which is infeasible for large replay buffers ($N$). To improve efficiency, we adopt a sparse GP approach using $M \ll N$ inducing states $\bar{\mathbf{s}}$ \citep{snelson2005sparse}. These inducing states act as a summary of the full dataset $\mathcal{D}_h$.

Given a new state $\mathbf{s}_*$, the sparse GP yields a predictive distribution for the subgoal $\mathbf{g}_*$:
\begin{equation}
p(\mathbf{g}_*|\mathbf{s}_*, \mathbf{D}_h, \bar{\mathbf{s}}) = \mathcal{N}\!\bigl(\mathbf{g}_* \mid \mu_*, \sigma_*^2\bigr).
\label{eq:gppredict_main_condensed} 
\end{equation}
where the predictive mean $\mu_*$ and variance $\sigma_*^2$ are given by:
\begin{equation}
\mu_* = \mathbf{k}_*^\top\mathbf{Q}_M^{-1}\mathbf{K}_{MN}(\Lambda + \sigma^2\mathbf{I})^{-1}\mathbf{g}
\label{eq:mu_star_main} 
\end{equation}
and
\begin{equation}
\sigma_*^2 = K_{**} - \mathbf{k}_*^\top(\mathbf{K}_M^{-1} - \mathbf{Q}_M^{-1})\mathbf{k}_* + \sigma^2.
\label{eq:sigma_star_squared_main} 
\end{equation}
Here, $\mathbf{k}_*$, $\mathbf{Q}_M$, $\mathbf{K}_{MN}$, $\Lambda$, $\mathbf{K}_M$, and $K_{**}$ depend on the kernel function evaluated at the inducing states $\bar{\mathbf{s}}$, the query state $\mathbf{s}_*$, and the states in the replay buffer $\mathcal{D}_h$. The inducing states $\bar{\mathbf{s}}$ and GP hyperparameters $\theta_{gp}$ are learned by maximizing the marginal likelihood. The detailed derivation of the sparse GP is provided in Appendix~\ref{sec:appendix_sparse_gp}. The learned inducing states implicitly define regions of interest, informing the subgoal generation process.

\subsection{Inducing States Informed Subgoal Selection}

Building on the predictive distribution of the sparse GP model, we develop a subgoal selection strategy that primarily relies on subgoals sampled from a diffusion model, with the GP's predictive mean acting as a complementary regularizer. At each high-level decision point, subgoals are selected using a hybrid mechanism that integrates the expressive capabilities of the diffusion model with the structural guidance provided by the GP.

Subgoals are predominantly sampled from the diffusion policy $\pi_{\theta_h}(\mathbf{g}\,|\,\mathbf{s}_*)$, which generates diverse and adaptive subgoals from its learned distribution. Complementing this, the GP's predictive mean, $\mu_*$ (Eq. \ref{eq:mu_star_main}), derived from the sparse GP with inducing states and informed by state-subgoal pairs in the high-level replay buffer, leverages its kernel to capture smoothness and correlations in the state space. This regularizes subgoal selection by encouraging consistency with the structure and dynamics observed in the training data, anchoring the selection process to meaningful patterns \ie, low-uncertainty regions supported by experience, while mitigating over-reliance on the diffusion model's flexibility.

The selection strategy is formalized as:
\begin{equation}
	\label{eq:subgoal_selection_main}
	\mathbf{g}_* = \begin{cases}
		\boldsymbol{\mu}_*, & \text{with probability } \varepsilon, \\
		\mathbf{g} \sim \pi_{\theta_h}(\mathbf{g}\,|\,\mathbf{s}_*), & \text{with probability } 1 - \varepsilon.
	\end{cases}
\end{equation}

To quantify the benefit of this subgoal selection, we consider the single macro-step reward $R(\mathbf{s}, \mathbf{g})$ for taking subgoal $\mathbf{g}\in\mathcal{G}$ in state $\mathbf{s}$, defined as:
\[
R(\mathbf{s}, \mathbf{g})
\;=\;
\mathbb{E}\!\Bigl[\sum_{i=0}^{k-1} r_{t+i}
\,\Big|\,
s_t=\mathbf{s},\,g_t=\mathbf{g}\Bigr],
\]
where $k$ is the number of lower-level steps that attempt to reach subgoal $\mathbf{g}$.

Under a near-optimal diffusion policy assumption (inspired by similar assumptions in actor-critic analysis \citep{kakade2002approximately}), we have the following guarantee for the regret of our subgoal selection strategy:

\begin{theorem}[Single-Step Regret Bound for the Subgoal Selection]
	\label{thm:mixture-regret_main}
	Let $R^*(\mathbf{s}) = \max_{\mathbf{g}\in\mathcal{G}} R(\mathbf{s}, \mathbf{g})$. Suppose there is a baseline reward $R_{\min}$ such that for all $\mathbf{s}\in\mathcal{S}$, $R\bigl(\mathbf{s}, \boldsymbol{\mu}(\mathbf{s})\bigr) \ge R_{\min}$. Under the assumption of a near-optimal diffusion policy (see Appendix \ref{proof:mixture-regret} for the full statement), the subgoal selection strategy defined above has a bounded single-step regret.
\end{theorem}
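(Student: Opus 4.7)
The plan is to expand the expected return under the hybrid selection rule in Eq.~\ref{eq:subgoal_selection_main} via the law of total expectation and then lower-bound each of the two branches separately. Formally, define the single-step regret at state $\mathbf{s}$ as $\Delta(\mathbf{s}) := R^*(\mathbf{s}) - \mathbb{E}[R(\mathbf{s}, \mathbf{g}_*)]$, where the outer expectation is over the coin flip determining the branch as well as the randomness in the diffusion sampler. By conditioning on which branch is chosen, one immediately obtains
\[
\mathbb{E}[R(\mathbf{s}, \mathbf{g}_*)] = \varepsilon\, R\bigl(\mathbf{s}, \boldsymbol{\mu}(\mathbf{s})\bigr) + (1-\varepsilon)\,\mathbb{E}_{\mathbf{g}\sim\pi_{\theta_h}}\!\bigl[R(\mathbf{s}, \mathbf{g})\bigr].
\]

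Next, I would invoke the two hypotheses of the theorem. The baseline assumption gives $R(\mathbf{s}, \boldsymbol{\mu}(\mathbf{s})) \ge R_{\min}$ for the GP-mean branch, which controls the worst-case loss incurred by the regularizing branch. The near-optimal diffusion policy assumption (to be stated precisely in the appendix as $\mathbb{E}_{\mathbf{g}\sim\pi_{\theta_h}}[R(\mathbf{s}, \mathbf{g})] \ge R^*(\mathbf{s}) - \delta$ for some small optimality gap $\delta \ge 0$) controls the diffusion branch. Substituting both into the decomposition yields
\[
\mathbb{E}[R(\mathbf{s}, \mathbf{g}_*)] \;\ge\; \varepsilon\, R_{\min} + (1-\varepsilon)\bigl(R^*(\mathbf{s}) - \delta\bigr),
\]
so that
\[
\Delta(\mathbf{s}) \;\le\; \varepsilon\bigl(R^*(\mathbf{s}) - R_{\min}\bigr) + (1-\varepsilon)\,\delta.
\]
This is the claimed bounded regret: the first term captures the penalty for sometimes overriding an (approximately) optimal diffusion action with the safer GP mean, and the second captures the intrinsic suboptimality of the diffusion policy. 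A uniform bound $\Delta(\mathbf{s}) \le \varepsilon(R_{\max} - R_{\min}) + (1-\varepsilon)\delta$ follows immediately if $R$ is bounded above by $R_{\max}$.

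The main obstacle is not the algebraic manipulation, which is elementary once the expectation is decomposed, but rather the justification of the near-optimal diffusion policy assumption in a form that is actually usable. I would treat this analogously to the conservative policy iteration style assumption of \citet{kakade2002approximately}: state it as a standing hypothesis that after sufficient training the diffusion policy concentrates mass on near-optimal subgoals with respect to $Q_h$, and note that Theorem~1 (GP regularization guiding alignment with $\mu_*$ in low-variance regions) together with the $\mathcal{L}_{dpg}$ term in Eq.~\ref{eq:lossfn} drives the diffusion policy toward a near-optimal regime. A secondary subtlety is that $R(\mathbf{s}, \mathbf{g})$ is a $k$-step expected return rather than a single reward, but since the theorem is at the level of one macro-step this is a matter of notation rather than substance and can be handled by treating the low-level rollout as part of the environment dynamics when defining $R$.
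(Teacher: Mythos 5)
Your proposal is correct and follows essentially the same route as the paper's proof: decompose the expected reward of the mixture policy into the $\varepsilon$-weighted GP-mean branch and the $(1-\varepsilon)$-weighted diffusion branch, lower-bound the former by $R_{\min}$ and the latter by $R^*(\mathbf{s})-\delta$ via the near-optimality assumption, and combine to obtain the regret bound $\varepsilon\bigl(R^*(\mathbf{s})-R_{\min}\bigr)+(1-\varepsilon)\delta$, which is exactly the bound derived in Appendix~\ref{proof:mixture-regret}. Your stated form of the near-optimal diffusion policy assumption coincides with Assumption~\ref{ass:diffusion_optimal_appendix}, so there is no gap.
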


Furthermore, under standard policy improvement assumptions (similar to those in policy gradient methods \citep{sutton2018reinforcement}), incorporating the GP mean provides a theoretical guarantee for non-decreasing performance:

\begin{proposition}[Single-Step Policy Improvement]
	\label{prop:monotonic-improv_main}
	Let $Q_h(\mathbf{s},\mathbf{g})$ be the high-level Q-value. Under certain conditions (see Appendix \ref{proof:monotonic-improv} for the full statement), selecting the GP mean subgoal with a certain probability does not degrade performance in a single-step sense.
\end{proposition}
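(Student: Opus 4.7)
The plan is to frame this as a one-step policy improvement argument adapted to the mixture-sampling rule of Eq.~\ref{eq:subgoal_selection_main}. I would first define the mixed high-level policy $\pi^{\text{mix}}_h(\cdot\,|\,\mathbf{s})$ which, given state $\mathbf{s}$, returns $\boldsymbol{\mu}_*(\mathbf{s})$ with probability $\varepsilon$ and a sample $\mathbf{g}\sim\pi_{\theta_h}(\cdot\,|\,\mathbf{s})$ with probability $1-\varepsilon$. The quantity of interest is the single-step value $\mathbb{E}_{\mathbf{g}\sim\pi^{\text{mix}}_h}[Q_h(\mathbf{s},\mathbf{g})]$ which by linearity of expectation equals
\[
\varepsilon\,Q_h(\mathbf{s},\boldsymbol{\mu}_*(\mathbf{s})) + (1-\varepsilon)\,\mathbb{E}_{\mathbf{g}\sim\pi_{\theta_h}}\!\left[Q_h(\mathbf{s},\mathbf{g})\right].
\]
The goal is to compare this with the pure diffusion baseline $\mathbb{E}_{\mathbf{g}\sim\pi_{\theta_h}}[Q_h(\mathbf{s},\mathbf{g})]$.

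Next I would make explicit the standing assumption that underwrites ``non-decreasing'' performance: in the state regions targeted by the hybrid rule (low GP predictive variance $\sigma_*^2(\mathbf{s})$), the GP mean is a near-greedy subgoal with respect to $Q_h$, so that $Q_h(\mathbf{s},\boldsymbol{\mu}_*(\mathbf{s})) \ge \mathbb{E}_{\mathbf{g}\sim\pi_{\theta_h}}[Q_h(\mathbf{s},\mathbf{g})]$. This is the direct analogue of the advantage-positivity condition used in the classical policy improvement lemma \citep{sutton2018reinforcement} and is consistent with Theorem~\ref{thm:mixture-regret_main}, whose near-optimal diffusion policy assumption and baseline bound $R(\mathbf{s},\boldsymbol{\mu}(\mathbf{s}))\ge R_{\min}$ imply a controlled suboptimality gap between $\boldsymbol{\mu}_*$ and any alternative subgoal. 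Once this inequality is in place, the improvement statement reduces to the trivial convex-combination step: replacing a $\varepsilon$-fraction of the distribution by an action with weakly larger $Q_h$ cannot decrease the expectation. I would then lift the single-state result to a policy-wise statement by averaging over the visitation distribution induced by $\pi^{\text{mix}}_h$, noting that the $k$-step intrinsic-reward structure of the macro action contributes a bias term that is identical for both branches of the mixture and therefore cancels.

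The principal obstacle is not the algebra but stating conditions under which the advantage inequality holds non-vacuously. This requires (i) restricting attention to a subset $\mathcal{S}_{\text{safe}}\subseteq\mathcal{S}$ characterized by a threshold $\sigma_*^2(\mathbf{s})\le\tau$ on GP uncertainty, (ii) a quantitative bridge from the GP regression error at $\boldsymbol{\mu}_*$ to suboptimality in $Q_h$ --- most cleanly obtained by assuming $Q_h$ is $L$-Lipschitz in $\mathbf{g}$ and invoking Proposition~1 to bound the discrepancy $\|\mathbf{g}-\boldsymbol{\mu}_*(\mathbf{s})\|$ in confident regions --- and (iii) handling the complement $\mathcal{S}\setminus\mathcal{S}_{\text{safe}}$, where the mixture rule might temporarily lose value; I would absorb this contribution into an $O(\varepsilon\,\Pr[\mathbf{s}\notin\mathcal{S}_{\text{safe}}])$ slack that vanishes as $\varepsilon\to 0$ or as the GP concentrates on the relabeled buffer. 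Packaging these ingredients into a minimal set of ``certain conditions'' that remains consistent with the assumptions of Theorem~\ref{thm:mixture-regret_main}, while still yielding a genuine (not trivially vacuous) improvement, is the delicate part of the argument.
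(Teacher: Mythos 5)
Your proposal follows essentially the same route as the paper's proof: both decompose the mixture policy's expected $Q_h$ as the convex combination $\varepsilon\,Q_h(\mathbf{s},\boldsymbol{\mu}_*(\mathbf{s})) + (1-\varepsilon)\,\mathbb{E}_{\mathbf{g}\sim\pi_{\theta_h}}[Q_h(\mathbf{s},\mathbf{g})]$, establish the key advantage inequality $Q_h(\mathbf{s},\boldsymbol{\mu}_*(\mathbf{s})) \ge \mathbb{E}_{\mathbf{g}\sim\pi_{\theta_h}}[Q_h(\mathbf{s},\mathbf{g})]$ by assuming $Q_h$ is $L$-Lipschitz in $\mathbf{g}$ and bounding the GP mean's distance to high-$Q$ subgoals (the paper makes this concrete via an exploration penalty $\Delta$ for the diffusion policy and the condition $L\epsilon < \Delta$), and then lift the pointwise inequality to $J(\widetilde{\pi}_h) \ge J(\pi_{\theta_h})$ by averaging over a visitation distribution assumed close to that of $\pi_{\theta_h}$. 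Your additional care in isolating a low-variance region $\mathcal{S}_{\text{safe}}$ and absorbing its complement into an $O(\varepsilon\,\Pr[\mathbf{s}\notin\mathcal{S}_{\text{safe}}])$ slack is a minor refinement of, not a departure from, the paper's argument.
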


The detailed proofs for Theorem \ref{thm:mixture-regret_main} and Proposition \ref{prop:monotonic-improv_main} are provided in Appendix \ref{proof:mixture-regret} and \ref{proof:monotonic-improv}, respectively. 

\section{Related Work}

HRL has been a significant field of study for addressing challenges such as long-term credit assignment and sparse rewards. It typically involves a high-level policy that breaks down the overarching task into manageable subtasks, which are then addressed by a more specialized low-level policy \citep{DayanH92, schmidhuber1993planning,kulkarni2016hierarchical,vezhnevets2017feudal,NachumGLL18,LevyKPS19,ZhangG0H020,li2020learning, kim2021landmark,li2023hierarchical, wang2023state,WangWYKP24}. The mechanism of this decomposition varies, with some approaches utilizing discrete values for option or skill selection \citep{bacon2017option,FoxKSG17,GregorRW17,konidaris2009efficient,EysenbachGIL19,SharmaGLKH20,bagaria2019option}, and others adopting learned subgoal space \citep{vezhnevets2017feudal,li2020learning, li2023hierarchical}. Despite these variations, a common challenge is the difficulty of leveraging advancements in the field of off-policy, model-free RL.

Recent efforts to enhance HRL's learning efficiency through off-policy training have highlighted issues such as instability and the inherent non-stationarity problem of HRL. For instance, \cite{NachumGLL18} introduces an off-policy method that relabels past experiences to mitigate non-stationary effects on training. Techniques from hindsight experience replay have been utilized to train multi-level policies concurrently, penalizing high-level policies for unattainable subgoals \citep{AndrychowiczCRS17,LevyKPS19}. To address the issue of large subgoal spaces, \cite{ZhangG0H020} proposed constraining the high-level action space with an adjacency requirement. \citet{wang2020i2hrl} improves stationarity by conditioning high-level decisions on both the low-level policy representation and environmental states. \citet{li2020learning} develops a slowness objective for learning a subgoal representation function. \citet{kim2021landmark} introduces a framework for training a high-level policy with a reduced action space guided by landmarks, \ie, promising states to explore.  Adopting the deterministic subgoal representation of \citet{li2020learning}, \citet{LiZWYZ22} proposes an exploration strategy to enhance the high-level exploration via designing measures of novelty and potential for subgoals, albeit the strategy relies on on visit counts for subgoals in the constantly changing subgoal representation space.

\begin{figure*}[!t]
	\centering
	\begin{subfigure}{0.24\linewidth}
		\captionsetup{skip=0pt, position=below}
		\includegraphics[width=\linewidth]{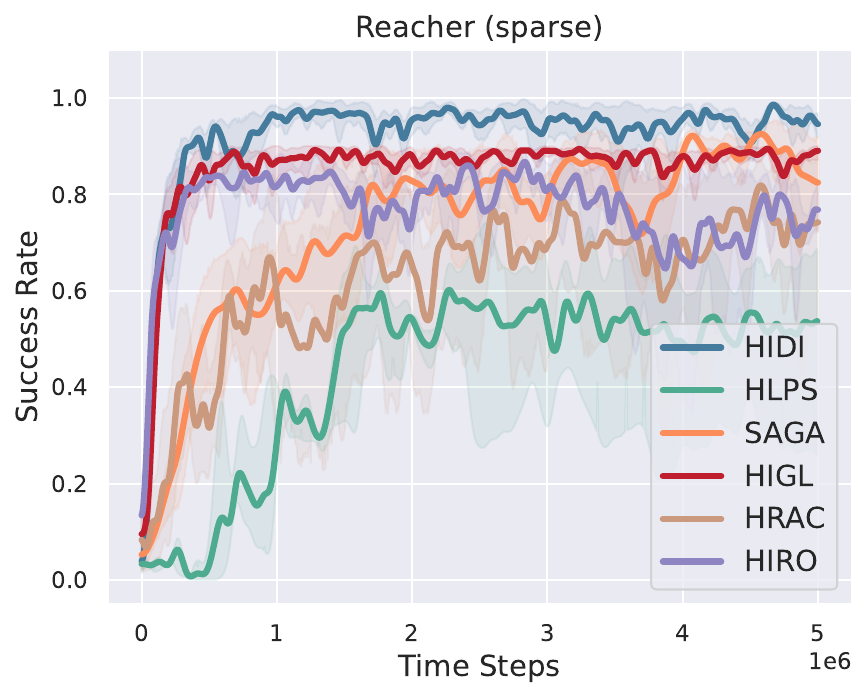}
	\end{subfigure}
	\begin{subfigure}{0.24\linewidth}
		\captionsetup{skip=0pt, position=below}
		\includegraphics[width=\linewidth]{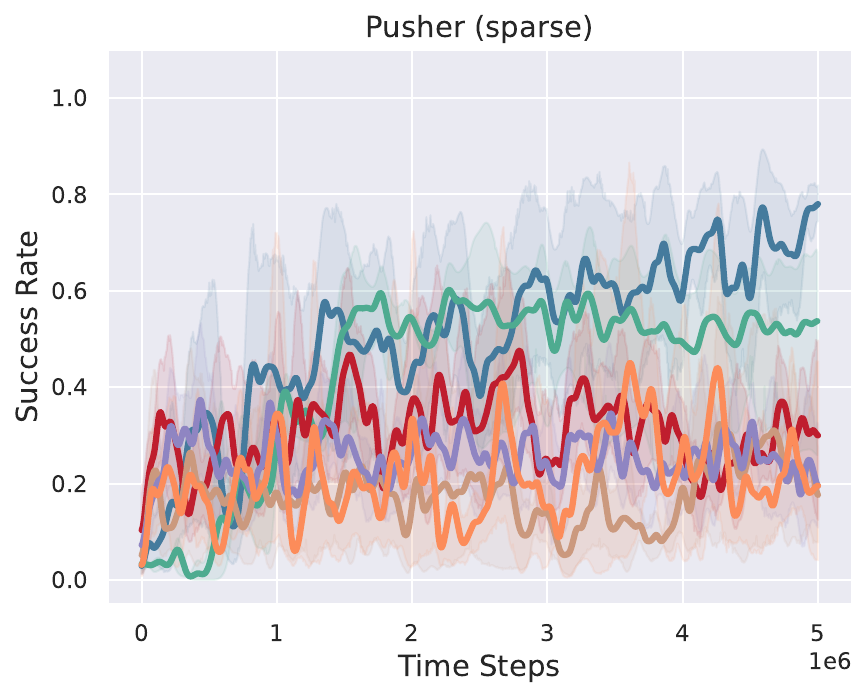}
	\end{subfigure}
	\begin{subfigure}{0.24\linewidth}
		\captionsetup{skip=0pt, position=below}
		\includegraphics[width=\linewidth]{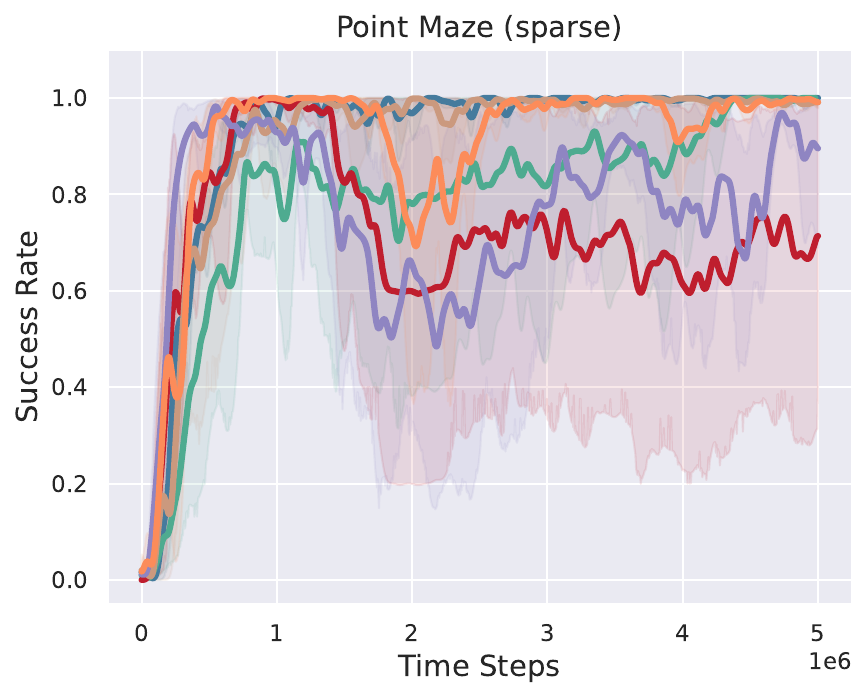}
	\end{subfigure}
	\begin{subfigure}{0.24\linewidth}
		\captionsetup{skip=0pt, position=below}
		\includegraphics[width=\linewidth]{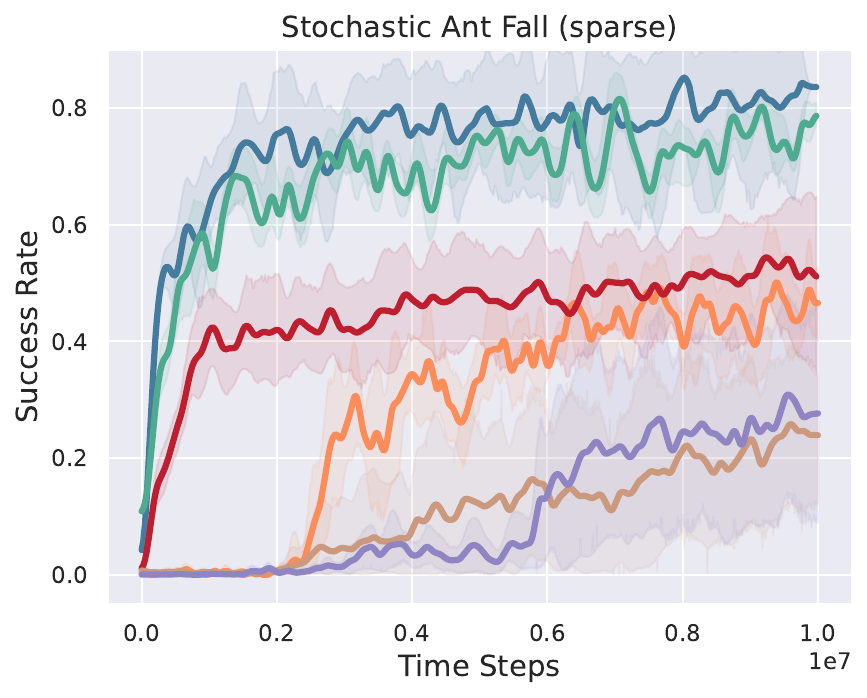}
	\end{subfigure}\\[-0.3ex]
	\begin{subfigure}{0.24\linewidth}
		\captionsetup{skip=0pt, position=below} 
		\includegraphics[width=\linewidth]{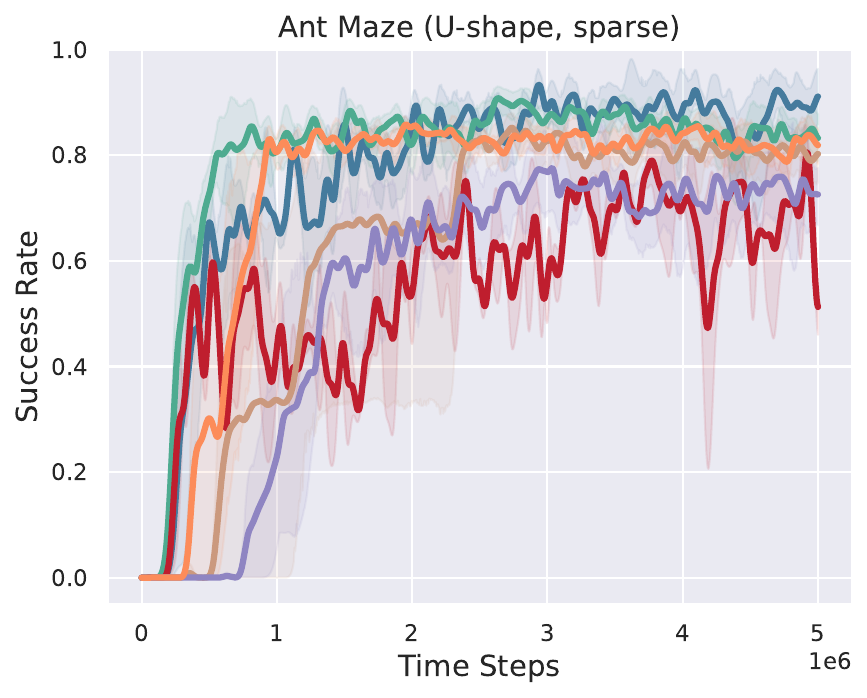}
	\end{subfigure} 
	\begin{subfigure}{0.24\linewidth}
		\captionsetup{skip=0pt, position=below} 
		\includegraphics[width=\linewidth]{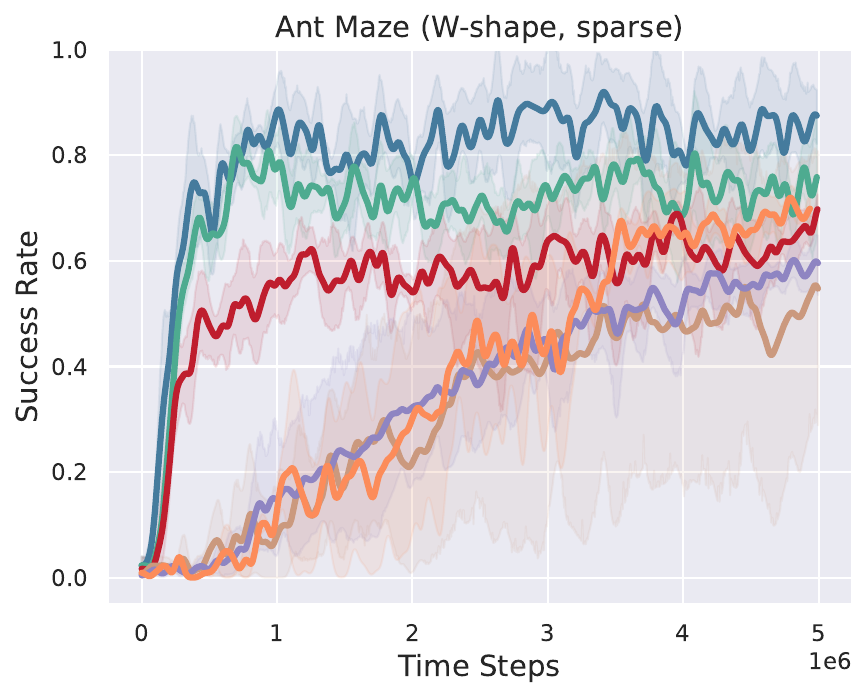}
	\end{subfigure}    
	\begin{subfigure}{0.24\linewidth}
		\captionsetup{skip=0pt, position=below} 
		\includegraphics[width=\linewidth]{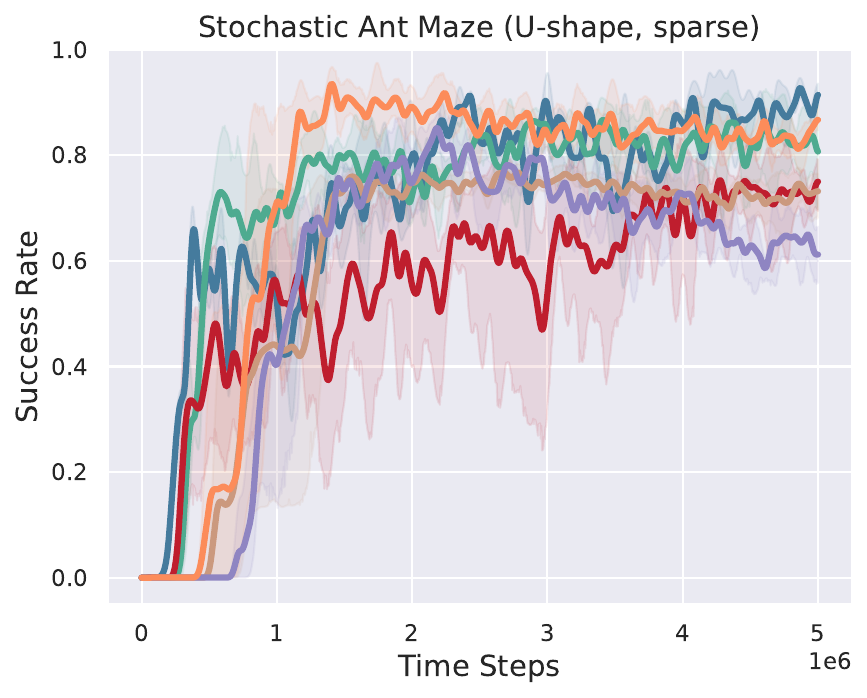}
	\end{subfigure}
	\begin{subfigure}{0.24\linewidth}
		\captionsetup{skip=0pt, position=below}
		\includegraphics[width=\linewidth]{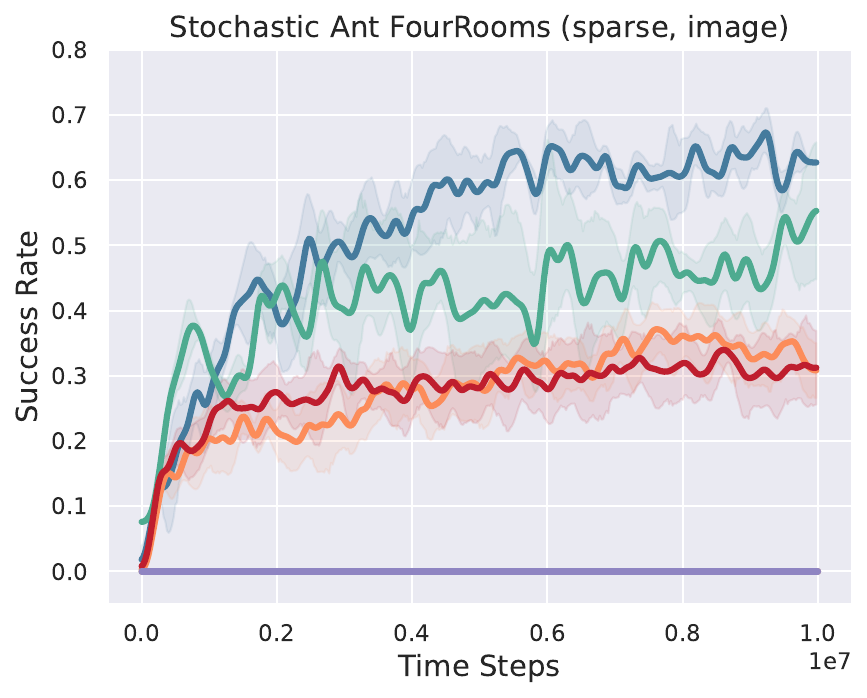}
	\end{subfigure}
	\\[-0.3ex]
	\begin{subfigure}{0.24\linewidth}
		\captionsetup{skip=0pt, position=below}
		\includegraphics[width=\linewidth]{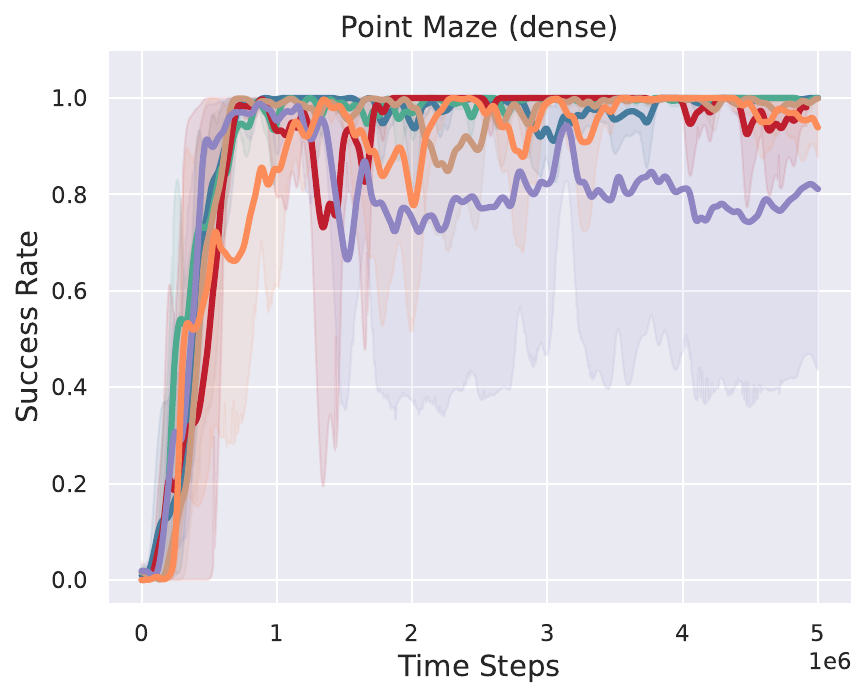}
	\end{subfigure} 
	\begin{subfigure}{0.24\linewidth}
		\captionsetup{skip=0pt, position=below} 
		\includegraphics[width=\linewidth]{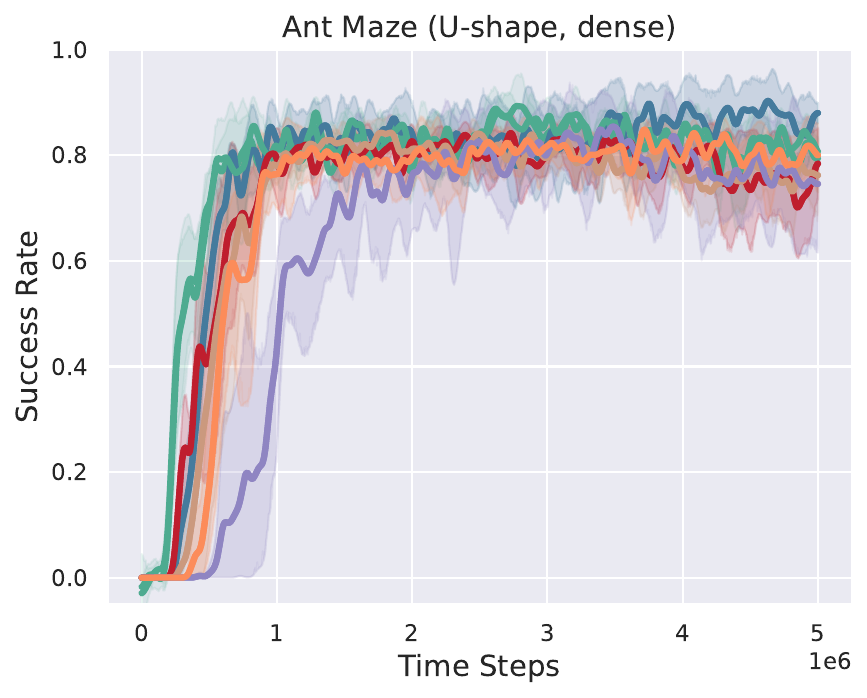}
	\end{subfigure} 
	\begin{subfigure}{0.24\linewidth}
		\captionsetup{skip=0pt, position=below} 
		\includegraphics[width=\linewidth]{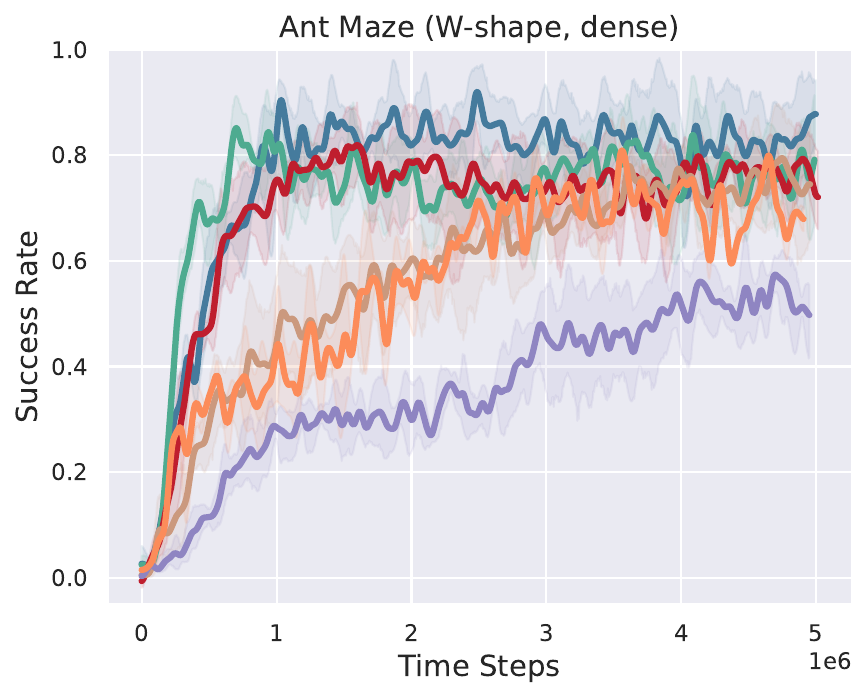}
	\end{subfigure}
	\begin{subfigure}{0.24\linewidth}
		\captionsetup{skip=0pt, position=below} 
		\includegraphics[width=\linewidth]{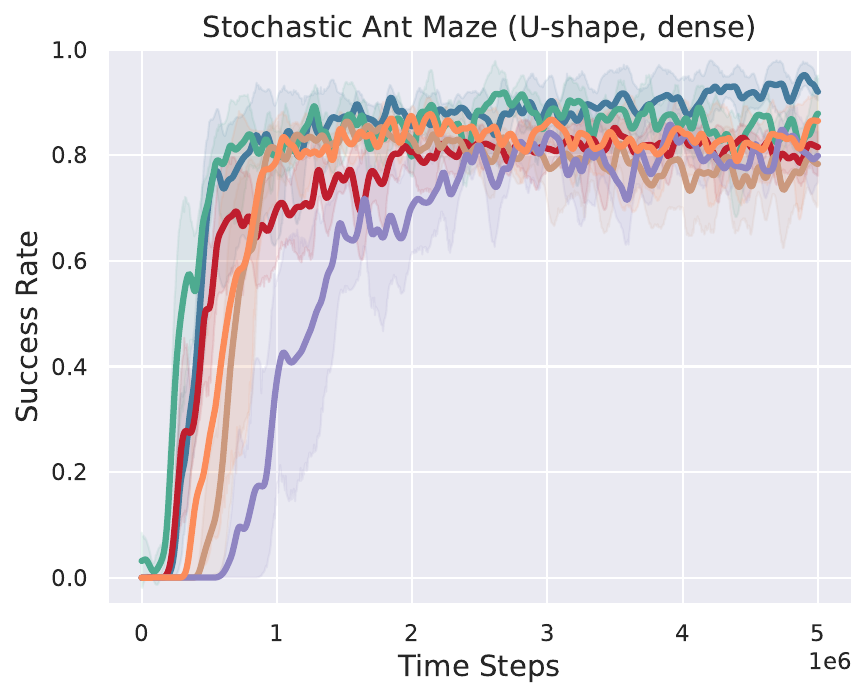}
	\end{subfigure}
	\caption{Learning curves of our method and baselines, \ie, \textbf{HLPS} \citep{WangWYKP24}, \textbf{SAGA} \citep{wang2023state}, \textbf{HIGL} \citep{kim2021landmark}, \textbf{HRAC} \citep{ZhangG0H020}, and \textbf{HIRO} \citep{NachumGLL18}. Each curve and its shaded region represent the average success rate and 95\% confidence interval respectively, averaged over 10 independent trials.}
	\label{fig:comparison}
\end{figure*}

The broader topic of goal generation in RL has also been explored \citep{FlorensaHGA18,NairPDBLL18,RenD00019,CamperoRKTRG21,wang2023state}. GoalGAN \citep{FlorensaHGA18} employs a GAN to generate appropriately challenging tasks for policy training, however it does not condition on observations and the sequential training of its GAN and policy. \citet{NairPDBLL18} combines unsupervised representation learning with goal-conditioned policy training.  \citet{RenD00019} proposes a method for generating immediately achievable hindsight goals. \citet{CamperoRKTRG21} introduces a framework wherein a teacher network proposes progressively challenging goals, rewarding the network based on the student's performance. SAGA \cite{wang2023state} introduces an adversarially guided framework for generating subgoals in goal-conditioned HRL. However, akin to the common challenges associated with GANs, SAGA may encounter issues such as stability and mode collapse due to its implicit modeling of subgoal distributions. In contrast, HIDI explicitly constructs subgoal distributions by progressively transforming noise into sample data, effectively capturing multimodal distributions and ensuring more stable training dynamics.

Diffusion models have been introduced to offline RL domain recently \citep{JannerDTL22, WangHZ23, kang2024efficient, li2023hierarchical, chenplanning24}. \citet{JannerDTL22} train an unconditional diffusion model to generate trajectories consisting of states and actions for offline RL. Approaches \citep{li2023hierarchical, chenplanning24} also extend diffusion model to offline HRL and generate trajectories at different levels. As a more related line of work, Diffsuion-QL \citep{WangHZ23} introduces diffusion models into offline RL and demonstrated that diffusion models are superior at modeling complex action distributions. \citet{kang2024efficient} improve Diffsuion-QL to be compatible with maximum likelihood-based RL algorithms. The success of offline RL methods leveraging diffusion policies \citep{WangHZ23,kang2024efficient}
motivates us to investigate the impact of using conditional diffusion model for the challenging subgoal generation in off-policy HRL.

Gaussian processes can encode flexible priors over functions, which are a probabilistic machine learning paradigm \citep{williams2006gaussian}. GPs have been adopted in various latent variable modeling tasks in RL. In \citet{engel2003}, the use of GPs for solving the RL problem of value estimation is introduced. Then \citet{kuss2003gaussian} uses GPs to model the the value function and system dynamics. \citet{deisenroth2013gaussian} develops a GP-based transition model of a model-based learning system, which explicitly incorporates model uncertainty into long-term planning and controller learning to reduce the effects of model errors. \citet{levine2011nonlinear} proposes an algorithm for inverse reinforcement learning that represents nonlinear reward functions with GPs, allowing the recovery of both a reward function and the hyperparameters of a kernel function that describes the structure of the reward. \citet{WangWYKP24} proposes a GP based method for learning probabilistic subgoal representations in HRL.

\subsection{Environments} 

Our experimental evaluation encompasses a diverse set of long-horizon continuous control tasks facilitated by the MuJoCo simulator \citep{todorov2012mujoco}, which are widely adopted in the HRL community. The environments selected for testing our framework, depicted in Figure \ref{fig:envs}, include \textbf{Reacher}, \textbf{Pusher}, \textbf{Point Maze}, \textbf{Ant Maze (U-shape)}, \textbf{Ant Maze (W-shape)}, \textbf{Ant Fall}, \textbf{Ant FourRooms} and variants with environmental stochasticity for Ant Maze (U-shape), Ant Fall, and Ant FourRooms by adding Gaussian noise with a standard deviation of $\sigma = 0.05$ to the $(x, y)$ positions of the ant robot at each step. Additionally, we adopt another variant labeled `Image' for the large-scale Ant FourRooms environment. In this variant, observations are low-resolution images formed by zeroing out the $(x, y)$ coordinates and appending a $5 \times 5 \times 3$ top-down view of the environment, as described in \cite{NachumGLL19, li2020learning}. We evaluate HIDI and baselines under dense and sparse reward paradigms. Dense rewards are the negative L2 distance to the target, while sparse rewards are 0 below a threshold and -1 otherwise. Maze tasks use a 2D $(x, y)$ goal space \citep{ZhangG0H020,kim2021landmark}, Reacher uses a 3D $(x, y, z)$ goal space, and Pusher employs a 6D space including the object's position. Relative subgoals are used in Maze tasks, and absolute schemes in Reacher and Pusher. \footnote{Details about the environments and parameter configurations are provided in the appendix.}

\begin{figure*}[!t]
	\centering
	\begin{subfigure}{0.24\linewidth}
		\centering
		\includegraphics[width=\linewidth]{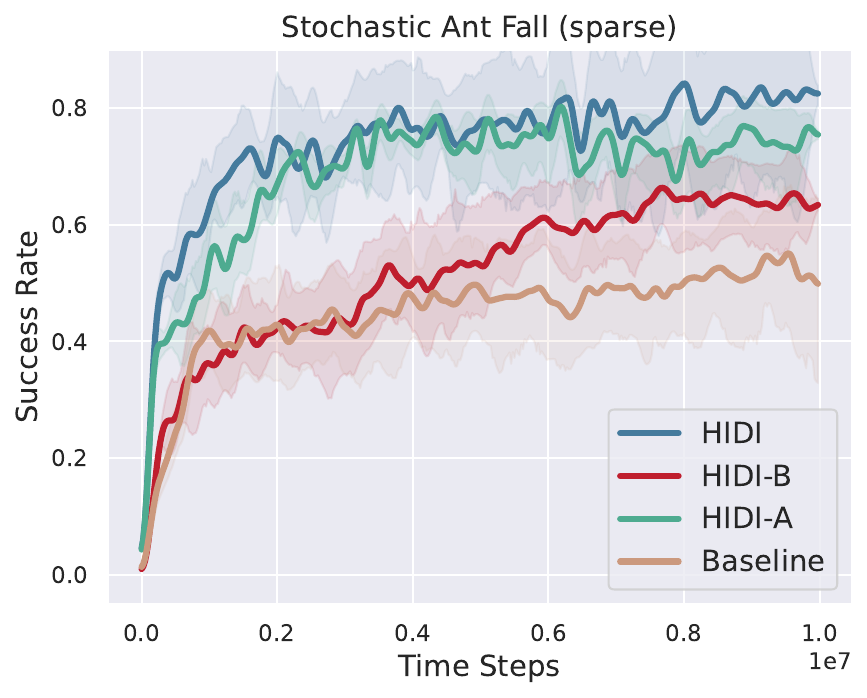}
		\subcaption{} 
		\label{fig:abl_a}
	\end{subfigure}
	\hfill
	\begin{subfigure}{0.24\linewidth}
		\centering
		\includegraphics[width=\linewidth]{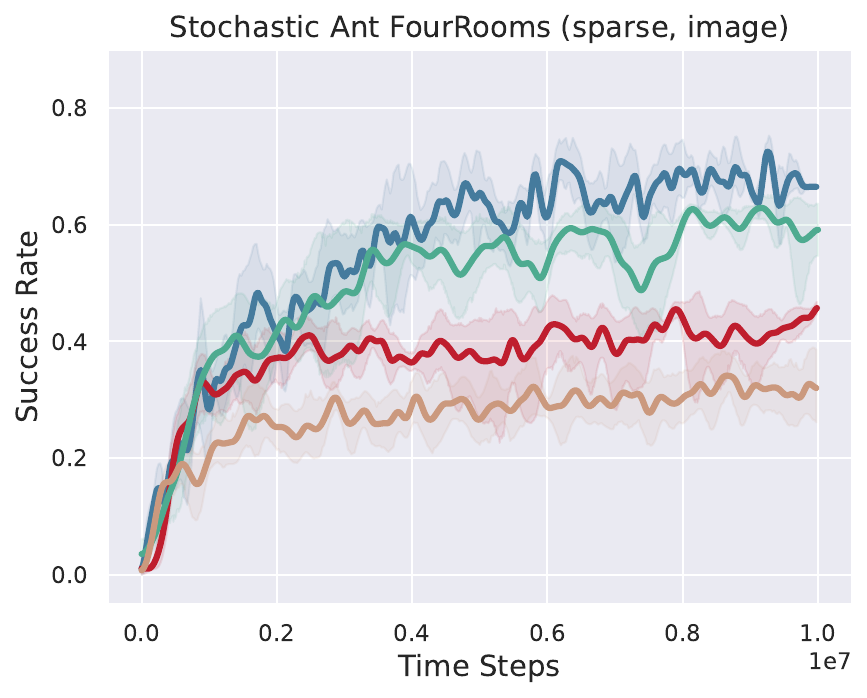}
		\subcaption{} 
		\label{fig:abl_b}
	\end{subfigure}
	\hfill
	\begin{subfigure}{0.24\linewidth}
		\centering
		\includegraphics[width=\linewidth]{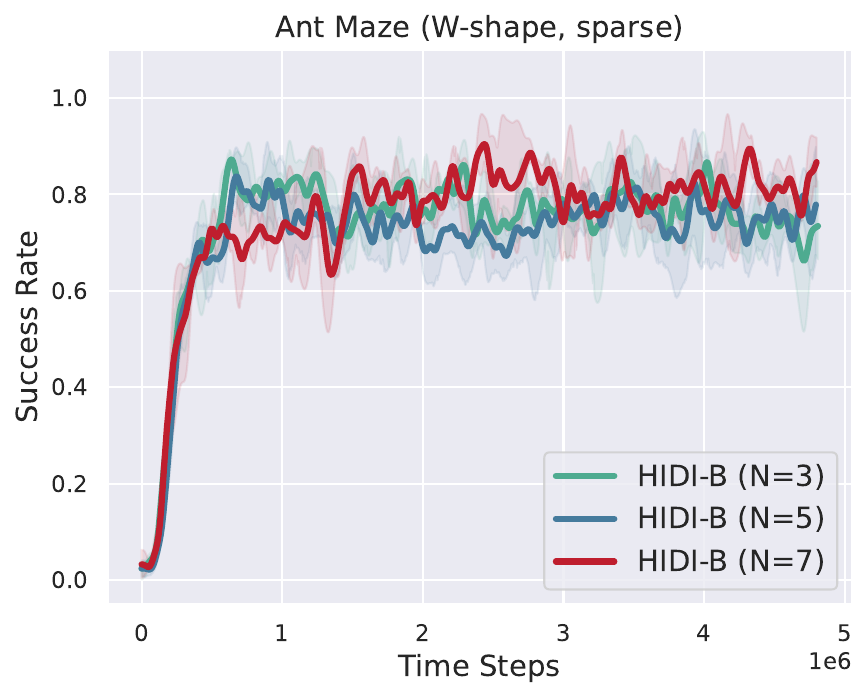}
		\subcaption{} 
		\label{fig:abl_c}
	\end{subfigure}
	\hfill
	\begin{subfigure}{0.24\linewidth}
		\centering
		\includegraphics[width=\linewidth]{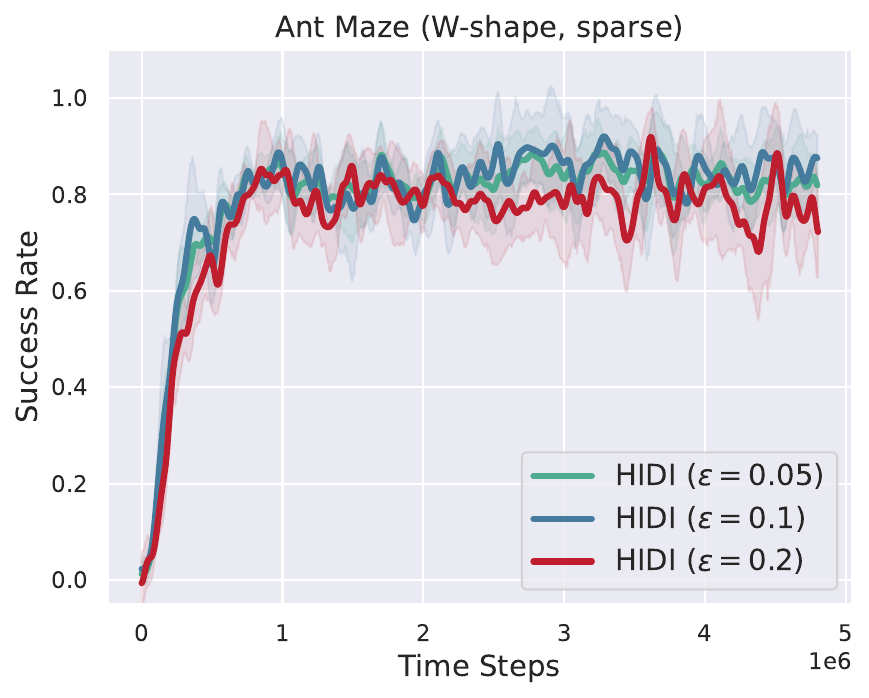}
		\subcaption{} 
		\label{fig:abl_d}
	\end{subfigure}
	
	\caption{(a-b) Learning curves of various baselines: 
		HIDI-A refers to HIDI without subgoal selection, 
		HIDI-B refers to HIDI without subgoal selection and GP priors. 
		(c) HIDI performance with varying diffusion steps. 
		(d) HIDI performance with varying probabilities for subgoal selection.}
	\label{fig:ablation}
\end{figure*}

\subsection{Analysis}
We conduct experiments comparing with the following state-of-the-art baseline methods: (1) \textbf{HLPS} \citep{WangWYKP24}: an HRL algorithm which proposes a GP-based subgoal latent space; (2) \textbf{SAGA} \citep{wang2023state}: an HRL algorithm that introduces an adversarially guided framework for generating subgoals; (3) \textbf{HIGL} \citep{kim2021landmark}: an HRL algorithm that trains a high-level policy with a reduced subgoal space guided by landmarks; (4) \textbf{HRAC} \citep{ZhangG0H020}: an HRL algorithm which introduces an adjacency network to restrict the high-level action space to a $k$-step adjacent region of the current state; (5) \textbf{HIRO} \citep{NachumGLL18}: an HRL algorithm that relabels the high-level actions based on hindsight experience \citep{andrychowicz2017hindsight}. Additionally, we present a theoretical analysis for HIDI
in the Appendix \ref{proof}. 

\paragraph{Can HIDI surpass state-of-the-art HRL methods in learning stability and asymptotic performance?}
Figure~\ref{fig:comparison} illustrates the learning curves of HIDI in comparison with baseline methods across various tasks. HIDI consistently surpasses all baselines in terms of learning stability, sample efficiency, and asymptotic performance. The advantage of the hierarchical diffusion policy approach is more pronounced in complex scenarios such as the \textit{Reacher} and \textit{Pusher} robotic arm tasks, and the \textit{Stochastic Ant Maze} task, where the environmental stochasticity poses additional challenges in generating reasonable subgoals. These results highlight the benefits of employing generative models for subgoal generation, as demonstrated by HIDI and SAGA. However, SAGA exhibits signs of instability in the more demanding robotic arm task Pusher, as well as considerably lower sample efficiency compared to HIDI in most tasks. 

\paragraph{Is HIDI capable of generating reachable subgoals to address the issues commonly encountered in off-policy training within HRL?}

Figure~\ref{fig:sgs} in the Appendix illustrates generated subgoals and reached subgoals of HIDI and compared baselines in Ant Maze (W-shape, sparse) with the same starting location. This allows for an intuitive comparison of subgoals generated by HIDI, SAGA, HIGL, HRAC, and HIRO. Notably, HIDI generates reasonable subgoals for the lower level to achieve, as demonstrated by the small divergence between the generated and reached subgoals, which provides a stable learning signal for the low-level policy. 
In contrast, subgoals generated by HIRO are often unachievable and fail to guide the agent to reach the final target; subgoals generated by HRAC frequently get stuck in local minima due to its local adjacency constraint; HIGL and SAGA show improvement in both constraining the subgoals locally while jumping out of the local optimum of subgoals, yet the high-level policy is not adequately compatible with the low-level skills, \ie, the increasing gap between the generated subgoal and reached subgoal leads to inferior performance compared with HIDI. This is further confirmed by the measure of distance between the generated subgoal and reached subgoal in Table~\ref{table:distance}. Subgoals generated by HLPS lie in a learned subgoal latent space and may not be quantitatively or qualitatively compared with other methods.

\paragraph{How do various design choices within HIDI impact its empirical performance and effectiveness?}
To understand the benefits of using a diffusion policy for generating subgoals, we constructed several baselines. \textit{HIDI-A} denotes a baseline without performing the proposed subgoal selection strategy, \textit{HIDI-B} is a baseline without adopting GP regularization and subgoal selection, while \textit{Baseline}, adapted from a variant of the subgoal relabeling method \ie, HIGL, is HIDI without the diffusion model, GP regularization, and subgoal selection strategy.
Fig.~\ref{fig:ablation} (a-b) illustrates the comparisons of various baselines and HIDI:
\begin{itemize}
	\item Diffusional Subgoals: The performance improvement from \textit{Baseline} to HIDI-B shows the benefit of adopting conditional diffusional model for subgoal generation, with a performance gain of $\sim$15\%.
	\item Uncertainty Regularization: The sampling efficiency and performance improvement $\sim$15\% and $\sim$16\% respectively from HIDI-B to HIDI-A indicates the advantage of employing the GP prior on subgoal generation as a surrogate distribution which potentially informs the diffusion process about uncertain areas. This highlights the GP's role in focusing learning on feasible regions consistent with past successful transitions.
	\item Subgoal Selection: Our proposed subgoal selection strategy demonstrates clear benefits. Comparing HIDI to HIDI-A shows performance improvements of approximately 7\% and 8\% on two challenging tasks. HIDI also achieves better sample efficiency compared to HIDI-A. 
	\item Diffusion Steps $N$: Fig.~\ref{fig:ablation} (c) shows the learning curves of baselines using varying number of diffusion steps $N$ used in HIDI. It empirically demonstrates that as $N$ increases from 3 to 7, the high-level policy becomes more expressive and capable of learning the more complex data distribution of subgoals. Since $N$ also serves as a trade-off between the expressiveness of subgoal modeling and computational complexity, we found that $N=5$ is an efficient and effective setting for all the tasks during the experiment.
	\item Subgoal Selection Probability $\epsilon$:  $\epsilon$ controls the chance to perform subgoal selection. As shown in Fig.~\ref{fig:ablation} (d), when $\epsilon$ is large, \eg, 0.25, the trade-off between the expressiveness of subgoal modeling and uncertainty information might be affected, \ie, excessive subgoals sampled in uncertain regions may contribute to performance instability. When $\epsilon$ is small, \eg, 0.05, the gain from subgoal selection would be decreased, and we set $\epsilon=0.1$ for all other results.
	\item Scaling Factor $\eta$: We investigate the impact of $\eta$ in Eq.~\ref{eq:lossfn}, which balances the diffusion objective and RL objective. As shown in Fig.~\ref{fig:ablation3} (Left), increasing $\eta$ improves performance at early training steps ($0\sim 10^{6}$), while all three settings achieve similar performance at larger training steps ($4.2\times 10^{6}\sim 5\times 10^{6}$). We report all other results based on $\eta=5$ without loss of generality.
	\item Scaling Factor $\psi$: $\psi$ adjusts the influence of GP prior in learning the distribution of diffusional subgoals. As shown in Fig.~\ref{fig:ablation3} (Middle) in the appendix, increasing $\psi$ gives stronger GP prior and may slightly affect the flexibility of diffusion model learning, while decreasing $\psi$ renders the model to approximate baseline HIDI-B. We set $\psi=10^{-3}$ for all other results.   
\end{itemize}    	

\section{Conclusion}

In this paper, we presented a conditional diffusion model-based framework for subgoal generation in hierarchical reinforcement learning. By directly modeling a state-conditioned subgoal distribution, the approach mitigates instabilities arising from off-policy training and leverages a Gaussian Process (GP) prior for explicit uncertainty quantification. We further proposed a subgoal selection strategy that integrates the diffusion model's expressiveness with the GP's structural guidance. This ensures subgoals align with meaningful patterns in the training data while maintaining robustness, supported by theoretical guarantees on regret and policy improvement. Experimental results across challenging continuous control benchmarks highlight the efficacy of this integrated approach in terms of both sample efficiency and performance.

\section*{Acknowledgements}

We acknowledge CSC – IT Center for Science, Finland, for awarding this project access to the LUMI supercomputer, owned by the EuroHPC Joint Undertaking, hosted by CSC (Finland) and the LUMI consortium through CSC. We acknowledge the computational resources provided by the Aalto Science-IT project. We acknowledge funding from Research Council of Finland (353198).

\section*{Impact Statement}

This paper presents work whose goal is to advance the field of 
Machine Learning. There are many potential societal consequences 
of our work, none which we feel must be specifically highlighted here.

\nocite{langley00}

\bibliography{example}
\bibliographystyle{icml2025}

\newpage
\appendix
\onecolumn
\section{Appendix}

\subsection{Theoretical Analysis of Diffusion-Based Subgoal Generation and GP Regularization}
\label{sec:theoretical_analysis}

In this section, we provide a theoretical analysis for: (i) the effectiveness of the diffusion model for subgoal generation in hierarchical reinforcement learning (HRL), and (ii) the role of Gaussian Process (GP) regularization in guiding the diffusion process to generate reachable and structured subgoals. 

\subsubsection{Notation and Preliminaries}
Let:
\begin{itemize}
    \item $s \in \mathcal{S}$ denote a state and $g \in \mathcal{G} \subseteq \mathcal{S}$ denote a subgoal.
    \item The high-level policy is parameterized by $\theta_h$ and defined as a reverse diffusion process:
    \begin{equation}
        \pi^h_{\theta_h}(g \mid s) = p_{\theta_h}(g_{0:N} \mid s) = \mathcal{N}(g_N; 0, I) \prod_{i=1}^{N} p_{\theta_h}(g_{i-1} \mid g_i, s),
        \label{eq:high_level_policy}
    \end{equation}
    where $g_0 = g$ is the final generated subgoal and $N$ is the number of diffusion steps.
    \item The reverse diffusion step is defined as
    \begin{equation}
        g_{i-1} = \frac{1}{\sqrt{\alpha_i}} \left( g_i - \beta_i \frac{1}{\sqrt{1-\bar{\alpha}_i}}\, \epsilon_{\theta_h}(g_i, s, i) \right) + \sqrt{\beta_i}\, \epsilon_i,\quad \epsilon_i \sim \mathcal{N}(0,I),
        \label{eq:reverse_diffusion}
    \end{equation}
    with $\alpha_i = 1-\beta_i$ and $\bar{\alpha}_i = \prod_{j=1}^{i} \alpha_j$.
    \item The diffusion model is trained using the loss
    \begin{equation}
        \mathcal{L}_{dm}(\theta_h) = \mathbb{E}_{i \sim \mathcal{U}(1,N),\, \epsilon \sim \mathcal{N}(0,I),\, (s,g) \sim D^h} \left[ \left\| \epsilon - \epsilon_{\theta_h}\left(\sqrt{\bar{\alpha}_i}\,g + \sqrt{1-\bar{\alpha}_i}\,\epsilon,\, s,\, i\right) \right\|^2 \right],
        \label{eq:diffusion_loss}
    \end{equation}
    where $D^h$ is the high-level replay buffer of relabeled subgoals.
    \item A Gaussian Process (GP) prior is placed on the subgoal distribution conditioned on $s$:
    \begin{equation}
        p(g \mid s; \theta_{gp}) = \mathcal{N}\bigl(g; 0, K_N + \sigma^2 I\bigr),
        \label{eq:gp_prior}
    \end{equation}
    where $K_N$ is the covariance matrix defined via a kernel function $K(s,s')$, and $\sigma^2$ is the noise variance.
    \item The GP regularization loss is defined as:
    \begin{equation}
        \mathcal{L}_{gp}(\theta_h, \theta_{gp}) = \mathbb{E}_{(s,g) \sim D^h}\left[-\log p(g \mid s; \theta_{gp})\right].
        \label{eq:gp_loss}
    \end{equation}
    \item The overall high-level training objective combines the diffusion loss, GP regularization (weighted by $\psi$), and an RL objective $\mathcal{L}_{dpg}(\theta_h)$:
    \begin{equation}
        L(\theta_h) = \mathcal{L}_{dm}(\theta_h) + \psi\, \mathcal{L}_{gp}(\theta_h, \theta_{gp}) + \eta\, \mathcal{L}_{dpg}(\theta_h).
        \label{eq:overall_loss}
    \end{equation}
\end{itemize}

\subsubsection{Validity of the Learned Subgoal Distribution via Diffusion}
We first establish conditions under which the reverse diffusion process yields a subgoal distribution close to the target distribution $q(g\mid s)$ (defined by the relabeled data in $D^h$).

\begin{theorem}[Validity of the Learned Subgoal Distribution]
\label{thm:sg_validity}
Let $q(g|s)$ be the target distribution of subgoals given state $s$ (implicitly defined by the relabeled data in $\mathcal{D}_h$). Let $p_{\theta_h}(g|s)$ be the distribution generated by the reverse diffusion process parameterized by $\theta_h$. Assume the noise prediction network $\epsilon_{\theta_h}$ is trained by minimizing the simplified objective:
\begin{equation}
    \mathcal{L}_{dm}(\theta_h) = \mathbb{E}_{i \sim \mathcal{U}(1,N),\, \epsilon \sim \mathcal{N}(0,I),\, (s,g_0) \sim \mathcal{D}^h} \left[ \left\| \epsilon - \epsilon_{\theta_h}\left(\sqrt{\bar{\alpha}_i}\,g_0 + \sqrt{1-\bar{\alpha}_i}\,\epsilon,\, s,\, i\right) \right\|^2 \right].
    \label{eq:diffusion_loss_revisited}
\end{equation}
If the expected loss achieved by the optimal parameters $\theta_h^*$ is bounded, $\mathbb{E}[\mathcal{L}_{dm}(\theta_h^*)] \leq \delta^2$, then the KL divergence between the target distribution and the learned distribution is bounded, $D_{KL}(q(g|s) || p_{\theta_h^*}(g|s)) \leq B(\delta, N, \{\beta_i\})$, where $B$ is a function that depends on the noise prediction error $\delta$, the number of steps $N$, and the noise schedule $\{\beta_i\}$, satisfying $\lim_{\delta \to 0} B(\delta, \dots) = 0$. Consequently, by Pinsker's inequality, the total variation distance is also bounded:
\begin{equation}
    \| q(g|s) - p_{\theta_h^*}(g|s) \|_{TV} \leq \sqrt{\frac{1}{2} D_{KL}(q(g|s) || p_{\theta_h^*}(g|s))} \leq \sqrt{\frac{1}{2} B(\delta, N, \{\beta_i\})}.
\end{equation}
\end{theorem}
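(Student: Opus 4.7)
The plan is to recover this bound by the now-standard variational argument of Ho et al., adapted to the state-conditional setting. First I would apply the data-processing inequality to pass from the marginal KL $D_{KL}(q(g|s)\,\|\,p_{\theta_h^*}(g|s))$ to the joint KL between the forward process $q(g^{1:N}|g^0,s)\,q(g^0|s)$ and the reverse process $p_{\theta_h^*}(g^{0:N}|s)$, which reduces the problem to bounding the negative ELBO of a conditional diffusion model. The $s$-conditioning enters only through the learned mean $\mu_{\theta_h^*}$ and does not alter the structure of the classical derivation.

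Next, I would decompose the joint KL into the familiar telescoping sum: a prior-matching term $L_N = D_{KL}(q(g^N|g^0)\,\|\,\mathcal{N}(0,I))$, denoising terms $L_{i-1} = \mathbb{E}_q[D_{KL}(q(g^{i-1}|g^i,g^0)\,\|\,p_{\theta_h^*}(g^{i-1}|g^i,s))]$ for $i=2,\dots,N$, and a reconstruction term $L_0$. Because both conditionals are Gaussian with the fixed covariance $\beta_i I$ chosen in the paper, each $L_{i-1}$ collapses to a squared error between the true posterior mean and $\mu_{\theta_h^*}$. Substituting the noise parameterization of Eq.~\ref{eq:reverse_diffusion} rewrites this squared error exactly as a per-step coefficient $w_i$ (an explicit function of $\beta_i, \alpha_i, \bar\alpha_i$) times the pointwise noise-prediction MSE $\|\epsilon - \epsilon_{\theta_h^*}(\sqrt{\bar\alpha_i}\,g^0 + \sqrt{1-\bar\alpha_i}\,\epsilon,\,s,\,i)\|^2$.

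Taking expectations and summing then ties $\sum_i L_{i-1}$ directly back to $\mathcal{L}_{dm}(\theta_h^*)$, up to the reweighting by $w_i$. Under the hypothesis $\mathbb{E}[\mathcal{L}_{dm}(\theta_h^*)] \leq \delta^2$ and the uniform $i$-sampling built into $\mathcal{L}_{dm}$, I would conclude $\sum_{i=1}^N L_{i-1} \leq N\,(\max_i w_i)\,\delta^2$. The term $L_N$ depends only on the schedule and is driven to zero as $\bar\alpha_N \to 0$, while $L_0$ can be handled via the discrete-decoder argument of Ho et al.\ or by a mild Lipschitz assumption on the data likelihood near $g^0$. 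Assembling these pieces defines $B(\delta,N,\{\beta_i\}) := N(\max_i w_i)\,\delta^2 + L_N + L_0^{\mathrm{bnd}}$, and a direct application of Pinsker's inequality delivers the total-variation bound.

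The main obstacle is the mismatch between the simplified loss $\mathcal{L}_{dm}$, which drops the $w_i$ weighting, and the variational loss that retains it: in order for $B \to 0$ as $\delta \to 0$ one needs $\max_i w_i$ to be uniformly bounded for the chosen schedule, a property that must be verified rather than assumed, or else the hypothesis must be strengthened to control a reweighted MSE. A secondary technicality is $L_N$, which does not shrink with $\delta$; either $B$ should be allowed to carry a schedule-dependent residual, or one must assume the schedule is long enough that $q(g^N|g^0)\approx\mathcal{N}(0,I)$. Beyond these two issues, the extension to the state-conditional reverse process and the passage from KL to TV via Pinsker are cosmetic.
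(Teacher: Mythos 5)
Your proposal follows essentially the same route as the paper's proof: the ELBO/joint-KL decomposition of Ho et al.\ into prior-matching, denoising, and reconstruction terms, the identification of each denoising KL with a $w_i$-weighted noise-prediction MSE, and a final application of Pinsker's inequality. The two obstacles you flag — that the simplified loss drops the $w_i$ weights (so one must verify $\max_i w_i$ is bounded for the schedule) and that the prior-matching term $L_N$ does not vanish as $\delta \to 0$ (so $B$ carries a schedule-dependent residual unless $\bar{\alpha}_N \to 0$ is assumed) — are genuine and are in fact glossed over in the paper's own proof, which asserts $\lim_{\delta \to 0} B = 0$ without addressing them; making them explicit strengthens rather than weakens your argument.
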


\begin{proof}
The proof relies on connecting the simplified objective $\mathcal{L}_{dm}$ to the variational lower bound (ELBO) on the log-likelihood $\log p_{\theta_h}(g_0|s)$. The ELBO for the reverse process $p_{\theta_h}(g_0|s)$ conditioned on $s$, marginalizing over $g_{1:N}$, is:
\begin{align}
    \log p_{\theta_h}(g_0|s) &\ge \mathbb{E}_{q(g_{1:N}|g_0, s)} \left[ \log \frac{p_{\theta_h}(g_{0:N}|s)}{q(g_{1:N}|g_0, s)} \right] \\
    &= \mathbb{E}_{q} \left[ \log p(g_N) + \sum_{i=1}^N \log p_{\theta_h}(g_{i-1}|g_i, s) - \sum_{i=1}^N \log q(g_i|g_{i-1}, s) \right] \\
    &= \underbrace{\mathbb{E}_q[\log p_{\theta_h}(g_0|g_1, s)]}_{\text{Reconstruction Term}} - \underbrace{\mathbb{E}_q\left[\sum_{i=2}^N D_{KL}(q(g_{i-1}|g_i, g_0, s) || p_{\theta_h}(g_{i-1}|g_i, s))\right]}_{\text{KL terms}} \nonumber \\ 
    &\quad - \underbrace{D_{KL}(q(g_N|g_0, s) || p(g_N))}_{\text{Prior Matching Term}} \label{eq:elbo_full}
\end{align}
where $q(g_{i-1}|g_i, g_0, s)$ is the true posterior of the forward process, which is tractable and Gaussian:
$q(g_{i-1}|g_i, g_0, s) = \mathcal{N}(g_{i-1}; \tilde{\mu}_i(g_i, g_0), \tilde{\beta}_i I)$ with
$\tilde{\mu}_i(g_i, g_0) = \frac{\sqrt{\bar{\alpha}_{i-1}}\beta_i}{1-\bar{\alpha}_i} g_0 + \frac{\sqrt{\alpha_i}(1-\bar{\alpha}_{i-1})}{1-\bar{\alpha}_i} g_i$ and $\tilde{\beta}_i = \frac{1-\bar{\alpha}_{i-1}}{1-\bar{\alpha}_i}\beta_i$.

The diffusion model $p_{\theta_h}(g_{i-1}|g_i, s) = \mathcal{N}(g_{i-1}; \mu_{\theta_h}(g_i, s, i), \beta_i I)$ uses a learned mean $\mu_{\theta_h}(g_i, s, i) = \frac{1}{\sqrt{\alpha_i}}\left(g_i - \frac{\beta_i}{\sqrt{1-\bar{\alpha}_i}} \epsilon_{\theta_h}(g_i, s, i)\right)$ and a fixed variance $\beta_i I$ (or sometimes $\tilde{\beta}_i I$).

\citet{ho2020denoising} shows that minimizing the simplified objective $\mathcal{L}_{dm}(\theta_h)$ (Eq. \ref{eq:diffusion_loss_revisited}) is equivalent to optimizing a weighted sum of the KL terms in the ELBO \eqref{eq:elbo_full}, specifically minimizing the discrepancy between the predicted noise $\epsilon_{\theta_h}$ and the actual noise $\epsilon$ used to generate $g_i = \sqrt{\bar{\alpha}_i}g_0 + \sqrt{1-\bar{\alpha}_i}\epsilon$.
Let $\mathcal{L}_{VLB}(\theta_h) = -\mathbb{E}_{q(g_0|s)}[\text{ELBO for } g_0]$. Then:
\begin{equation}
    D_{KL}(q(g_0|s) || p_{\theta_h}(g_0|s)) = \mathcal{L}_{VLB}(\theta_h) - H(q(g_0|s)),
\end{equation}
where $H(q(g_0|s))$ is the entropy of the true data distribution, which is constant w.r.t. $\theta_h$.
Minimizing $\mathcal{L}_{dm}$ upper bounds the terms contributing to $\mathcal{L}_{VLB}$. Specifically, the expected squared error in noise prediction relates to the KL divergence terms:
\begin{equation}
    \mathbb{E}_{g_0, \epsilon} [\|\epsilon - \epsilon_{\theta_h}(\sqrt{\bar{\alpha}_i}g_0 + \sqrt{1-\bar{\alpha}_i}\epsilon, s, i)\|^2] = C_i \mathbb{E}_{g_0} [D_{KL}(q(g_{i-1}|g_i, g_0, s) || p_{\theta_h}(g_{i-1}|g_i, s))] + \text{const},
\end{equation}
where $C_i = \frac{2\sigma_i^2(1-\alpha_i)}{\beta_i^2}$ depends on the variance $\sigma_i^2 = \beta_i$ or $\tilde{\beta}_i$ used in $p_{\theta_h}$. (See Appendix B of \citep{ho2020denoising} for details).

Summing over $i$ (with appropriate weighting, often simplified in practice as in $\mathcal{L}_{dm}$), if $\mathbb{E}[\mathcal{L}_{dm}(\theta_h^*)] \leq \delta^2$, it implies that the sum of the KL divergence terms in the ELBO expression \eqref{eq:elbo_full} is bounded by some function $B'(\delta, N, \{\beta_i\})$.
\begin{equation}
    \mathcal{L}_{VLB}(\theta_h^*) \le \mathbb{E}_{q(g_0|s)}[-\log p_{\theta_h^*}(g_0|g_1,s)] + B'(\delta, N, \{\beta_i\}) + D_{KL}(q(g_N|g_0,s)||p(g_N))
\end{equation}
If we also assume the reconstruction term is well-behaved (or absorbed into the bound) and the prior matching term is small (as $T \to \infty$), minimizing $\mathcal{L}_{dm}$ effectively minimizes an upper bound on $\mathcal{L}_{VLB}$.
Thus, $D_{KL}(q(g_0|s) || p_{\theta_h^*}(g_0|s)) = \mathcal{L}_{VLB}(\theta_h^*) - H(q(g_0|s))$ is bounded by a function $B(\delta, N, \{\beta_i\})$ which incorporates $B'$ and other terms, and vanishes as $\delta \to 0$.

Finally, applying Pinsker's inequality, $\| q(g|s) - p_{\theta_h^*}(g|s) \|_{TV}^2 \leq \frac{1}{2} D_{KL}(q(g|s) || p_{\theta_h^*}(g|s))$, yields the desired bound on the total variation distance.
\end{proof}

\subsubsection{GP Regularization and Its Interaction with the Diffusion Model}
\label{sec:appendix_gp_proof}
We now analyze how the GP regularization term influences the generated subgoal and guides the diffusion process.

\begin{theorem}[Guiding Effect of GP Regularization]
\label{thm:gp_guiding}
Let the overall loss function for the high-level policy parameters $\theta_h$ be 
\[
L(\theta_h) = \mathcal{L}_{dm}(\theta_h) + \psi\, \mathcal{L}_{gp}(\theta_h, \theta_{gp}) + \eta\, \mathcal{L}_{dpg}(\theta_h).
\]
The GP regularization term is defined using the reparameterization trick, where 
\[
\mathbf{g} = f(\boldsymbol{\epsilon}', \mathbf{s}; \theta_h)
\]
is the subgoal generated by the reverse diffusion process from base noise $\boldsymbol{\epsilon}' \sim \mathcal{N}(0,I)$ and state condition $\mathbf{s}$:
\begin{equation}
    \mathcal{L}_{gp}(\theta_h, \theta_{gp}) = \mathbb{E}_{\mathbf{s} \sim \mathcal{D}_h,\, \boldsymbol{\epsilon}' \sim \mathcal{N}(0,I)} \left[ -\log p_{GP}\Bigl(f(\boldsymbol{\epsilon}', \mathbf{s}; \theta_h) \mid \mathbf{s}; \mathcal{D}_h, \theta_{gp}\Bigr) \right],
    \label{eq:gp_loss_in_proof}
\end{equation}
where 
\[
p_{GP}(\mathbf{g}\mid \mathbf{s}; \mathcal{D}_h, \theta_{gp}) = \mathcal{N}(\mathbf{g}\mid \mu_*(\mathbf{s}), \sigma_*^2(\mathbf{s}) I)
\]
is the predictive distribution of the (sparse) GP conditioned on the high-level data $\mathcal{D}_h$ (see Eq. \ref{eq:gppredict_appendix}), assuming isotropic variance for simplicity. The gradient of this term with respect to the diffusion model parameters $\theta_h$ is given by:
\begin{equation}
    \nabla_{\theta_h} \mathcal{L}_{gp} = \mathbb{E}_{\mathbf{s} \sim \mathcal{D}_h,\, \boldsymbol{\epsilon}' \sim \mathcal{N}(0,I)} \left[ \left( \frac{\mathbf{g} - \mu_*(\mathbf{s})}{\sigma_*^2(\mathbf{s})} \right)^\top \nabla_{\theta_h} \mathbf{g} \right],
    \label{eq:gp_gradient}
\end{equation}
where $\mathbf{g} = f(\boldsymbol{\epsilon}', \mathbf{s}; \theta_h)$. This gradient term encourages the parameters $\theta_h$ to be updated during training such that the generated subgoals $\mathbf{g}$ tend to move closer to the GP predictive mean $\mu_*(\mathbf{s})$, with the influence being stronger in regions where the GP has low predictive variance $\sigma_*^2(\mathbf{s})$.
\end{theorem}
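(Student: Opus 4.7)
The plan is to obtain the gradient formula by a direct application of the reparameterization trick, and then read off the interpretation from the resulting expression. Since $\mathbf{g}=f(\boldsymbol{\epsilon}',\mathbf{s};\theta_h)$ is a deterministic, differentiable function of parameters with the randomness shifted to the exogenous base noise $\boldsymbol{\epsilon}'\sim\mathcal{N}(0,I)$ and the data sample $\mathbf{s}\sim\mathcal{D}_h$, under mild regularity (integrable gradients, continuous differentiability of the reverse-diffusion map $f$ in $\theta_h$, which holds because each reverse step in Eq.~\eqref{eq:reverse_diffusion} is a smooth function of $\theta_h$) dominated convergence lets me commute $\nabla_{\theta_h}$ with the outer expectation in Eq.~\eqref{eq:gp_loss_in_proof}.

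First, I would write the integrand explicitly using the predictive isotropic Gaussian density,
$$-\log p_{GP}(\mathbf{g}\mid\mathbf{s}) \;=\; \frac{1}{2\sigma_*^2(\mathbf{s})}\,\bigl\|\mathbf{g}-\mu_*(\mathbf{s})\bigr\|^2 \;+\; \frac{d}{2}\log\!\bigl(2\pi\sigma_*^2(\mathbf{s})\bigr),$$
where $d$ is the subgoal dimension. Here $\mu_*(\mathbf{s})$ and $\sigma_*^2(\mathbf{s})$ depend on $\mathbf{s}$, on $\mathcal{D}_h$, on the inducing states $\bar{\mathbf{s}}$, and on the GP hyperparameters $\theta_{gp}$ through Eqs.~\eqref{eq:mu_star_main}--\eqref{eq:sigma_star_squared_main}, but not on $\theta_h$. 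I would make this assumption explicit (treating the GP predictive moments as a stop-gradient external regularizer, analogous to target-network usage), so the log-variance term disappears when we differentiate with respect to $\theta_h$ and only the quadratic term contributes.

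Second, applying the chain rule through the only remaining $\theta_h$-dependence, namely $\mathbf{g}$, gives
$$\nabla_{\theta_h}\!\bigl[-\log p_{GP}(\mathbf{g}\mid\mathbf{s})\bigr] \;=\; \bigl(\nabla_{\mathbf{g}}[-\log p_{GP}(\mathbf{g}\mid\mathbf{s})]\bigr)^{\!\top}\,\nabla_{\theta_h}\mathbf{g},$$
and the Gaussian score is the standardized residual $\nabla_{\mathbf{g}}[-\log p_{GP}]=(\mathbf{g}-\mu_*(\mathbf{s}))/\sigma_*^2(\mathbf{s})$. Substituting, pushing the gradient back inside the expectation, and recalling that $\mathbf{g}=f(\boldsymbol{\epsilon}',\mathbf{s};\theta_h)$ then yields exactly Eq.~\eqref{eq:gp_gradient}. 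For the interpretation claim, I would argue infinitesimally: a gradient-descent step $\Delta\theta_h = -\alpha\,\nabla_{\theta_h}\mathcal{L}_{gp}$ induces, to first order, a change $\Delta\mathbf{g}\approx(\nabla_{\theta_h}\mathbf{g})\Delta\theta_h$ whose inner product with $(\mathbf{g}-\mu_*)/\sigma_*^2$ is negative, so each update reduces the squared standardized residual, i.e., drives $\mathbf{g}$ toward $\mu_*(\mathbf{s})$. The prefactor $1/\sigma_*^2(\mathbf{s})$ then directly controls the strength of this pull, making it large in regions the GP considers certain and small elsewhere.

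The main obstacle is not the calculus but the modeling justification: strictly speaking, when the diffusion policy influences which transitions enter $\mathcal{D}_h$, the predictive moments $\mu_*$ and $\sigma_*^2$ do inherit an implicit dependence on $\theta_h$ through the buffer and through the jointly learned inducing states. I would handle this by stating the blocked-gradient assumption up front, noting that it matches the usual semi-amortized treatment of prior regularizers and is consistent with how $\mathcal{L}_{gp}$ is implemented. A secondary subtlety is the vector-versus-matrix convention of $\nabla_{\theta_h}\mathbf{g}$ (Jacobian versus gradient) in the stated formula; I would clarify that the transpose and product in Eq.~\eqref{eq:gp_gradient} should be read as a Jacobian–vector product, which is exactly what the reverse-mode differentiation through the diffusion chain computes.
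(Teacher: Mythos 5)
Your proposal matches the paper's own proof in all essentials: both apply the reparameterization trick to commute the gradient with the expectation, use the chain rule through $\mathbf{g}=f(\boldsymbol{\epsilon}',\mathbf{s};\theta_h)$, and identify the Gaussian score $(\mathbf{g}-\mu_*(\mathbf{s}))/\sigma_*^2(\mathbf{s})$ as the backpropagated error signal, followed by the same first-order gradient-descent interpretation. Your explicit statement of the stop-gradient treatment of $\mu_*$ and $\sigma_*^2$ (which the paper leaves implicit) and the Jacobian--vector-product reading of $\nabla_{\theta_h}\mathbf{g}$ are welcome clarifications but do not change the route.
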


\begin{proof}
We analyze the gradient of the GP loss term $\mathcal{L}_{gp}$ with respect to the parameters $\theta_h$ of the diffusion model, which defines the reverse process $p_{\theta_h}(\mathbf{g}\mid \mathbf{s})$. 

The subgoal $\mathbf{g}$ generated by the diffusion model is given by
\[
\mathbf{g} = f(\boldsymbol{\epsilon}', \mathbf{s}; \theta_h),
\]
where $\boldsymbol{\epsilon}' \sim \mathcal{N}(0,I)$ and $f(\cdot)$ is differentiable with respect to $\theta_h$. Using the reparameterization trick, we write:
\begin{align*}
    \nabla_{\theta_h} \mathcal{L}_{gp} &= \nabla_{\theta_h} \mathbb{E}_{\mathbf{s} \sim \mathcal{D}_h,\, \boldsymbol{\epsilon}' \sim \mathcal{N}(0,I)} \left[ -\log p_{GP}\Bigl(f(\boldsymbol{\epsilon}', \mathbf{s}; \theta_h) \mid \mathbf{s}; \mathcal{D}_h, \theta_{gp}\Bigr) \right] \\
    &= \mathbb{E}_{\mathbf{s} \sim \mathcal{D}_h,\, \boldsymbol{\epsilon}' \sim \mathcal{N}(0,I)} \left[ \nabla_{\theta_h} \Bigl( -\log p_{GP}\bigl(f(\boldsymbol{\epsilon}', \mathbf{s}; \theta_h) \mid \mathbf{s}; \mathcal{D}_h, \theta_{gp}\bigr) \Bigr) \right] \\
    &= \mathbb{E}_{\mathbf{s} \sim \mathcal{D}_h,\, \boldsymbol{\epsilon}' \sim \mathcal{N}(0,I)} \left[ \left( \nabla_{\mathbf{g}} \left(-\log p_{GP}(\mathbf{g} \mid \mathbf{s}; \mathcal{D}_h, \theta_{gp})\right) \right)^\top \nabla_{\theta_h} \mathbf{g} \right]_{\mathbf{g} = f(\boldsymbol{\epsilon}', \mathbf{s}; \theta_h)}.
\end{align*}
For the Gaussian predictive distribution 
\[
p_{GP}(\mathbf{g}\mid \mathbf{s}) = \mathcal{N}(\mathbf{g}\mid \mu_*(\mathbf{s}), \sigma_*^2(\mathbf{s}) I),
\]
the negative log-likelihood is
\[
-\log p_{GP}(\mathbf{g}\mid \mathbf{s}) = \frac{1}{2\sigma_*^2(\mathbf{s})}\|\mathbf{g} - \mu_*(\mathbf{s})\|^2 + \frac{D}{2}\log(2\pi\sigma_*^2(\mathbf{s})),
\]
where $D$ is the dimensionality of $\mathbf{g}$. Differentiating with respect to $\mathbf{g}$ yields
\[
\nabla_{\mathbf{g}} \left[-\log p_{GP}(\mathbf{g}\mid \mathbf{s})\right] = \frac{\mathbf{g} - \mu_*(\mathbf{s})}{\sigma_*^2(\mathbf{s})}.
\]
Substituting this into our expression, we obtain
\[
\nabla_{\theta_h} \mathcal{L}_{gp} = \mathbb{E}_{\mathbf{s} \sim \mathcal{D}_h,\, \boldsymbol{\epsilon}' \sim \mathcal{N}(0,I)} \left[ \left( \frac{\mathbf{g} - \mu_*(\mathbf{s})}{\sigma_*^2(\mathbf{s})} \right)^\top \nabla_{\theta_h} \mathbf{g} \right].
\]

During optimization via gradient descent, the update for $\theta_h$ is given by
\[
\theta_h \leftarrow \theta_h - \alpha \nabla_{\theta_h} L(\theta_h),
\]
with $\alpha$ being the learning rate. The contribution from the GP loss term is $-\alpha \psi \nabla_{\theta_h} \mathcal{L}_{gp}$, which effectively drives the parameters $\theta_h$ to adjust so that the generated subgoal $\mathbf{g}$ aligns more closely with the GP predictive mean $\mu_*(\mathbf{s})$, particularly when $\sigma_*^2(\mathbf{s})$ is small (indicating high confidence). 

This completes the proof.
\end{proof}

\begin{remark}[Connection to KL Divergence]
The GP regularization term $\mathcal{L}_{gp}$ can be related to the KL divergence between the learned conditional distribution $p_{\theta_h}(\mathbf{g}\mid \mathbf{s})$ and the GP predictive distribution 
\[
p_{GP}(\mathbf{g}\mid \mathbf{s}) = \mathcal{N}(\mathbf{g}\mid \mu_*(\mathbf{s}), \sigma_*^2(\mathbf{s}) I).
\]
Specifically, 
\begin{align*}
    \mathcal{L}_{gp}(\theta_h, \theta_{gp}) &= \mathbb{E}_{\mathbf{s} \sim \mathcal{D}_h}\left[ \mathbb{E}_{\mathbf{g} \sim p_{\theta_h}(\cdot\mid \mathbf{s})} \left[-\log p_{GP}(\mathbf{g}\mid \mathbf{s})\right] \right] \\
    &= \mathbb{E}_{\mathbf{s} \sim \mathcal{D}_h}\left[ D_{KL}\Bigl(p_{\theta_h}(\cdot\mid \mathbf{s}) \Big\| p_{GP}(\cdot\mid \mathbf{s})\Bigr) + H\Bigl(p_{\theta_h}(\cdot\mid \mathbf{s})\Bigr) \right],
\end{align*}
where $H(\cdot)$ denotes differential entropy. Thus, minimizing $\mathcal{L}_{gp}$ reduces the KL divergence between the learned subgoal distribution and the GP predictive distribution, thereby encouraging the diffusion model to produce subgoals that are aligned with the GP’s predictions.
\end{remark}

\subsubsection{Discussion}
\paragraph{Diffusion Model Validity.}  
Theorem \ref{thm:sg_validity} demonstrates that if the noise prediction error is sufficiently small, the reverse diffusion process accurately approximates the target state-conditioned subgoal distribution. Hence, under reasonable conditions, the diffusion model is theoretically justified for subgoal generation in HRL.

\paragraph{GP Regularization Impact.}  
Theorem \ref{thm:gp_guiding} shows that the GP regularization term actively pulls the generated subgoal towards the GP predictive mean. In regions where the GP is confident (i.e., $\lambda_{\min}(K_N + \sigma^2 I)$ is large), the generated subgoal is tightly coupled with the observed, reachable subgoals. This guides the diffusion model, ensuring that even if the learned distribution is inherently non-stationary, the GP regularization mitigates extreme deviations by serving as a principled, uncertainty-aware anchor.

\subsection{Theoretical Analysis of Subgoal Selection}
\label{proof} 
\begin{assumption}[Near-Optimal Diffusion Policy (for Single-Step)]
	\label{ass:diffusion_optimal_appendix}
	We assume that after sufficient training and data coverage, for every $\mathbf{s}\in\mathcal{S}$,
	\[
	\mathbb{E}_{\mathbf{g}\sim \pi_{\theta_h}(\cdot|\mathbf{s})}[R(\mathbf{s},\mathbf{g})]
	\;\;\ge\;\;
	\max_{\mathbf{g}'\in\mathcal{G}} R(\mathbf{s},\mathbf{g}')
	\;-\; \delta,
	\]
	for some small $\delta > 0$. This states that the \emph{diffusion-based} subgoal distribution $\pi_{\theta_h}$ nearly achieves the maximum possible single-step reward at each $\mathbf{s}$.
\end{assumption}

\begin{proof}[Detailed Proof of Theorem~\ref{thm:mixture-regret_main}]
	\label{proof:mixture-regret}
	Let $R^*(\mathbf{s}) = \max_{\mathbf{g}\in\mathcal{G}} R(\mathbf{s}, \mathbf{g})$ be the optimal single-step reward. The regret of the subgoal selection strategy $\widetilde{\pi}_h$ at state $\mathbf{s}$ is given by $R^*(\mathbf{s}) - \mathbb{E}_{\mathbf{g}\sim \widetilde{\pi}_h}[R(\mathbf{s},\mathbf{g})]$.
	
	By the definition of the subgoal selection strategy $\widetilde{\pi}_h(\mathbf{g}|\mathbf{s}) = \varepsilon\, \delta_{\boldsymbol{\mu}(\mathbf{s})}(\mathbf{g}) + (1-\varepsilon)\,\pi_{\theta_h}(\mathbf{g}|\mathbf{s})$, the expected reward under this policy is:
	\begin{align*}
		\mathbb{E}_{\mathbf{g}\sim \widetilde{\pi}_h}[R(\mathbf{s},\mathbf{g})]
		&= \int_{\mathcal{G}} R(\mathbf{s},\mathbf{g}) \widetilde{\pi}_h(\mathbf{g}|\mathbf{s}) d\mathbf{g} \\
		&= \int_{\mathcal{G}} R(\mathbf{s},\mathbf{g}) \left[ \varepsilon\, \delta_{\boldsymbol{\mu}(\mathbf{s})}(\mathbf{g}) + (1-\varepsilon)\,\pi_{\theta_h}(\mathbf{g}|\mathbf{s}) \right] d\mathbf{g} \\
		&= \varepsilon \int_{\mathcal{G}} R(\mathbf{s},\mathbf{g}) \delta_{\boldsymbol{\mu}(\mathbf{s})}(\mathbf{g}) d\mathbf{g} + (1-\varepsilon) \int_{\mathcal{G}} R(\mathbf{s},\mathbf{g}) \pi_{\theta_h}(\mathbf{g}|\mathbf{s}) d\mathbf{g} \\
		&= \varepsilon \, R\bigl(\mathbf{s}, \boldsymbol{\mu}(\mathbf{s})\bigr) + (1-\varepsilon)\,\mathbb{E}_{\mathbf{g}\sim \pi_{\theta_h}}[R(\mathbf{s},\mathbf{g})],
	\end{align*}
	where we used the property of the Dirac delta function in the last step.
	
	Given that $R\bigl(\mathbf{s}, \boldsymbol{\mu}(\mathbf{s})\bigr) \ge R_{\min}$ by assumption, and under Assumption~\ref{ass:diffusion_optimal_appendix}, we have $\mathbb{E}_{\mathbf{g}\sim \pi_{\theta_h}}[R(\mathbf{s},\mathbf{g})] \ge R^*(\mathbf{s}) - \delta$, we can lower bound the expected reward:
	\begin{align*}
		\mathbb{E}_{\mathbf{g}\sim \widetilde{\pi}_h}[R(\mathbf{s},\mathbf{g})]
		&\ge \varepsilon \, R_{\min} + (1-\varepsilon) \, (R^*(\mathbf{s}) - \delta).
	\end{align*}
	Now, we can bound the single-step regret:
	\begin{align*}
		R^*(\mathbf{s}) - \mathbb{E}_{\mathbf{g}\sim \widetilde{\pi}_h}[R(\mathbf{s},\mathbf{g})]
		&\le R^*(\mathbf{s}) - \left[ \varepsilon \, R_{\min} + (1-\varepsilon) \, (R^*(\mathbf{s}) - \delta) \right] \\
		&= R^*(\mathbf{s}) - \varepsilon R_{\min} - (1-\varepsilon) R^*(\mathbf{s}) + (1-\varepsilon) \delta \\
		&= \varepsilon R^*(\mathbf{s}) - \varepsilon R_{\min} + (1-\varepsilon) \delta \\
		&= \varepsilon (R^*(\mathbf{s}) - R_{\min}) + (1-\varepsilon) \delta.
	\end{align*}
	Thus, the single-step regret of the subgoal selection strategy is bounded by $\varepsilon\,\Bigl(R^*(\mathbf{s}) - R_{\min}\Bigr) + (1-\varepsilon)\,\delta$.
\end{proof}

\begin{proof}[Detailed Proof of Proposition~\ref{prop:monotonic-improv_main}]
	\label{proof:monotonic-improv}
	This proof relies on the principle of policy improvement, a fundamental concept in reinforcement learning \citep{sutton2018reinforcement}. The value function of a policy $\pi$ is given by $J(\pi) = \mathbb{E}_{\mathbf{s}_0 \sim d_0, \mathbf{g}_t \sim \pi(\cdot|\mathbf{s}_t)} \left[ \sum_{t=0}^\infty \gamma^t r_t^h \right]$, where $r_t^h$ is the high-level reward. A single step of policy improvement involves changing the policy in a way that increases the value function.
	
	Consider the expected Q-value under the subgoal selection strategy  $\widetilde{\pi}_h$ at state $\mathbf{s}$:
	\begin{align*}
		\mathbb{E}_{\mathbf{g}\sim \widetilde{\pi}_h(\cdot|\mathbf{s})} [Q_h(\mathbf{s},\mathbf{g})]
		&= \int_{\mathcal{G}} Q_h(\mathbf{s},\mathbf{g}) \widetilde{\pi}_h(\mathbf{g}|\mathbf{s}) d\mathbf{g} \\
		&= \int_{\mathcal{G}} Q_h(\mathbf{s},\mathbf{g}) \left[ \varepsilon\, \delta_{\boldsymbol{\mu}(\mathbf{s})}(\mathbf{g}) + (1-\varepsilon)\,\pi_{\theta_h}(\mathbf{g}|\mathbf{s}) \right] d\mathbf{g} \\
		&= \varepsilon \, Q_h\bigl(\mathbf{s},\boldsymbol{\mu}(\mathbf{s})\bigr) + (1-\varepsilon)\, \mathbb{E}_{\mathbf{g}\sim \pi_{\theta_h}(\cdot|\mathbf{s})} [Q_h(\mathbf{s},\mathbf{g})].
	\end{align*}

Assume the high-level Q-function $Q_h(\mathbf{s},\mathbf{g})$ is \textit{Lipschitz smooth} in $\mathbf{g}$ with constant $L$:
\begin{equation}
    |Q_h(\mathbf{s},\mathbf{g}_1) - Q_h(\mathbf{s},\mathbf{g}_2)| \leq L\|\mathbf{g}_1 - \mathbf{g}_2\|, \quad \forall \mathbf{g}_1,\mathbf{g}_2 \in \mathcal{G}.
    \label{eq:lipschitz}
\end{equation}

If the GP's predictive mean $\boldsymbol{\mu}(\mathbf{s})$ is close to subgoals $\mathbf{g}$ with high $Q_h$, then $\boldsymbol{\mu}(\mathbf{s})$ will inherit high Q-values through smoothness. 
The GP is trained on state-subgoal pairs $(\mathbf{s},\mathbf{g})$ from the replay buffer $\mathcal{B}_h$, containing high-reward transitions (by definition of $R(\mathbf{s},\mathbf{g})$). The GP kernel $k(\cdot,\cdot)$ captures correlations in $\mathcal{S} \times \mathcal{G}$, enabling generalization of high-Q subgoals to new states.

The predictive mean $\boldsymbol{\mu}(\mathbf{s})$ minimizes the posterior expected squared error:
\begin{equation}
    \boldsymbol{\mu}(\mathbf{s}) = \arg\min_{\mathbf{g}'} \mathbb{E}_{\mathbf{g} \sim p(\mathbf{g}|\mathbf{s}, \mathcal{B}_h)}\left[\|\mathbf{g}' - \mathbf{g}\|^2\right],
    \label{eq:gp_mean}
\end{equation}
where $p(\mathbf{g}|\mathbf{s}, \mathcal{B}_h)$ is the posterior subgoal distribution. If high-Q subgoals in $\mathcal{B}_h$ cluster around $\boldsymbol{\mu}(\mathbf{s})$, then:
\begin{equation}
    Q_h(\mathbf{s},\boldsymbol{\mu}(\mathbf{s})) \geq \mathbb{E}_{\mathbf{g} \sim p(\mathbf{g}|\mathbf{s}, \mathcal{B}_h)}[Q_h(\mathbf{s},\mathbf{g})].
    \label{eq:q_inequality}
\end{equation}
This holds under \textit{unimodality} or \textit{concentration} of high-Q subgoals in $\mathcal{B}_h$.

The diffusion policy $\pi_{\theta_h}$ generates subgoals through a stochastic generative process. TD3 employs deterministic high-level actions with exploration noise. However, the diffusion model's sampling process inherently encourages diversity by gradually denoising from a Gaussian distribution. This results in subgoals distributed around high-reward regions in $\mathcal{G}$, with a trade-off between exploitation (high-$Q_h$ subgoals) and exploration (diverse subgoals).

Let $\mathbf{g}^* = \arg\max_{\mathbf{g}} Q_h(\mathbf{s}, \mathbf{g})$ denote the optimal subgoal. The diffusion policy $\pi_{\theta_h}$ samples subgoals such that:
\[
\mathbb{E}_{\mathbf{g} \sim \pi_{\theta_h}}[Q_h(\mathbf{s}, \mathbf{g})] \leq Q_h(\mathbf{s}, \mathbf{g}^*) - \Delta,
\]
where $\Delta > 0$ quantifies the exploration penalty due to the diffusion process's stochasticity. Meanwhile, the GP mean $\boldsymbol{\mu}(\mathbf{s})$ acts as a \textit{deterministic proxy} for subgoals frequently visited in high-reward trajectories within $\mathcal{B}_h$. If $\boldsymbol{\mu}(\mathbf{s})$ approximates $\mathbf{g}^*$ with error $\epsilon$ (i.e., $\|\boldsymbol{\mu}(\mathbf{s}) - \mathbf{g}^*\| \leq \epsilon$), Lipschitz continuity of $Q_h$ implies:
\[
Q_h(\mathbf{s}, \boldsymbol{\mu}(\mathbf{s})) \geq Q_h(\mathbf{s}, \mathbf{g}^*) - L\epsilon.
\]
Thus, for $L\epsilon < \Delta$, we have:
\[
Q_h(\mathbf{s}, \boldsymbol{\mu}(\mathbf{s})) \geq Q_h(\mathbf{s}, \mathbf{g}^*) - L\epsilon \geq \mathbb{E}_{\mathbf{g} \sim \pi_{\theta_h}}[Q_h(\mathbf{s}, \mathbf{g})].
\]

    Therefore:
	\begin{align*}
		\mathbb{E}_{\mathbf{g}\sim \widetilde{\pi}_h(\cdot|\mathbf{s})} [Q_h(\mathbf{s},\mathbf{g})]
		&\ge \varepsilon \, \mathbb{E}_{\mathbf{g}\sim \pi_{\theta_h}(\cdot|\mathbf{s})} [Q_h(\mathbf{s},\mathbf{g})] + (1-\varepsilon)\, \mathbb{E}_{\mathbf{g}\sim \pi_{\theta_h}(\cdot|\mathbf{s})} [Q_h(\mathbf{s},\mathbf{g})] \\
		&= \mathbb{E}_{\mathbf{g}\sim \pi_{\theta_h}(\cdot|\mathbf{s})} [Q_h(\mathbf{s},\mathbf{g})].
	\end{align*}
	The value function can be expressed as $J(\pi) = \mathbb{E}_{\mathbf{s} \sim d_\pi} \left[ \mathbb{E}_{\mathbf{g} \sim \pi(\cdot|\mathbf{s})} [Q_h(\mathbf{s},\mathbf{g})] \right]$, where $d_\pi$ is the stationary distribution of states under policy $\pi$.
	
	If the induced state distribution $d_{\widetilde{\pi}_h}$ remains close to $d_{\pi_{\theta_h}}$, which is a common assumption for a single-step policy improvement analysis as the change in policy is controlled by $\varepsilon$, then:
	\begin{align*}
		J(\widetilde{\pi}_h)
		&= \mathbb{E}_{\mathbf{s}\sim d_{\widetilde{\pi}_h}}\!\left[\mathbb{E}_{\mathbf{g}\sim \widetilde{\pi}_h}[Q_h(\mathbf{s},\mathbf{g})]\right] \\
		&\ge \mathbb{E}_{\mathbf{s}\sim d_{\widetilde{\pi}_h}}\!\left[\mathbb{E}_{\mathbf{g}\sim \pi_{\theta_h}}[Q_h(\mathbf{s},\mathbf{g})]\right] \\
		&\approx \mathbb{E}_{\mathbf{s}\sim d_{\pi_{\theta_h}}}\!\left[\mathbb{E}_{\mathbf{g}\sim \pi_{\theta_h}}[Q_h(\mathbf{s},\mathbf{g})]\right] \\
		&= J(\pi_{\theta_h}).
	\end{align*}
	Thus, $J(\widetilde{\pi}_h) \ge J(\pi_{\theta_h})$, indicating that the subgoal selection strategy  achieves a value function at least as good as the original policy $\pi_{\theta_h}$ in a single step.
\end{proof}

\subsection{Sparse Gaussian Process Derivation}
\label{sec:appendix_sparse_gp}

As mentioned in the main text, standard Gaussian Process (GP) regression involves inverting an $N \times N$ covariance matrix, where $N$ is the number of data points in the high-level replay buffer $\mathcal{D}_h$. This has a computational complexity of $\mathcal{O}(N^3)$, which is prohibitive for large datasets typical in RL. To address this, we employ a sparse GP approximation using inducing points \citep{snelson2005sparse}.

We introduce a smaller set of $M$ inducing states $\bar{\mathbf{s}} = \{\bar{\mathbf{s}}_m\}_{m=1}^M$, where $M \ll N$. These inducing states are associated with corresponding "imaginary" target subgoals $\bar{\mathbf{g}} = \{\bar{\mathbf{g}}_m\}_{m=1}^M$. The core idea is that the predictive distribution for any new state $\mathbf{s}_*$ depends only on these $M$ inducing points, which summarize the full dataset.

Let $\bar{\mathcal{D}} = (\bar{\mathbf{s}}, \bar{\mathbf{g}})$ denote the pseudo-dataset of inducing points. The joint distribution of the observed subgoals $\mathbf{g}$ (from $\mathcal{D}_h$) and the imaginary subgoals $\bar{\mathbf{g}}$ under the GP prior is:
\[
p(\mathbf{g}, \bar{\mathbf{g}} | \mathbf{s}, \bar{\mathbf{s}}) = \mathcal{N}\left( \begin{bmatrix} \mathbf{g} \\ \bar{\mathbf{g}} \end{bmatrix} \middle| \mathbf{0}, \begin{bmatrix} \mathbf{K}_{NN} + \sigma^2\mathbf{I} & \mathbf{K}_{NM} \\ \mathbf{K}_{MN} & \mathbf{K}_{MM} \end{bmatrix} \right),
\]
where $\mathbf{K}_{NN}$, $\mathbf{K}_{NM}$, $\mathbf{K}_{MN} = \mathbf{K}_{NM}^\top$, and $\mathbf{K}_{MM} = \mathbf{K}_M$ are covariance matrices computed using the kernel function $K$ between the states $\mathbf{s}$ in $\mathcal{D}_h$ and the inducing states $\bar{\mathbf{s}}$. Specifically, $[\mathbf{K}_{NM}]_{nm} = K(\mathbf{s}_n, \bar{\mathbf{s}}_m)$ and $[\mathbf{K}_{MM}]_{mm'} = K(\bar{\mathbf{s}}_m, \bar{\mathbf{s}}_{m'})$.

The sparse approximation assumes that the observed subgoals $\mathbf{g}$ are conditionally independent given the imaginary subgoals $\bar{\mathbf{g}}$. The likelihood of a single subgoal $\mathbf{g}$ given a state $\mathbf{s}$ and the inducing variables $(\bar{\mathbf{s}}, \bar{\mathbf{g}})$ is formulated based on the conditional distribution $p(\mathbf{g}|\mathbf{s}, \bar{\mathbf{s}}, \bar{\mathbf{g}}) = \int p(\mathbf{g}|\mathbf{s}, \mathbf{f})p(\mathbf{f}|\bar{\mathbf{s}}, \bar{\mathbf{g}}) d\mathbf{f}$, where $\mathbf{f}$ are latent function values. A common approximation (used by \citet{titsias2009variational}, related to FITC by \citet{snelson2005sparse}) leads to:
\begin{equation}
\label{eq:gp2_appendix} 
p(\mathbf{g}|\mathbf{s}, \bar{\mathcal{D}}, \bar{\mathbf{g}}) \approx \mathcal{N}\left(\mathbf{g}|\mathbf{k}_\mathbf{s}^\top\mathbf{K}_M^{-1}\bar{\mathbf{g}}, K_{\mathbf{ss}} - \mathbf{k}_\mathbf{s}^\top\mathbf{K}_M^{-1}\mathbf{k}_\mathbf{s} + \sigma^2 \right),
\end{equation}
where $[\mathbf{k}_\mathbf{s}]_m = K(\mathbf{s},\bar{\mathbf{s}}_m)$ and $K_{\mathbf{ss}} = K(\mathbf{s},\mathbf{s})$.

The overall likelihood for the set of observed subgoals $\mathbf{g}$ given corresponding states $\mathbf{s}$ and the inducing variables is approximated as:
\begin{equation}
\label{eq:gp3_appendix} 
\begin{split}
p(\mathbf{g}|\mathbf{s}, \bar{\mathbf{s}}, \bar{\mathbf{g}})
&\approx \prod_{n=1}^{N} p(\mathbf{g}_n|\mathbf{s}_n, \bar{\mathbf{s}}, \bar{\mathbf{g}}) \\
&\approx \mathcal{N}\!\bigl(\mathbf{g} \mid \mathbf{K}_{NM}\mathbf{K}_M^{-1}\bar{\mathbf{g}}, \Lambda + \sigma^2\mathbf{I}\bigr).
\end{split}
\end{equation}
where $\Lambda$ is a diagonal matrix with $[\Lambda]_{nn} = K(\mathbf{s}_n, \mathbf{s}_{n}) - \mathbf{k}_{n}^\top\mathbf{K}_M^{-1}\mathbf{k}_{n}$, and $\mathbf{k}_n$ is the vector of kernel evaluations between $\mathbf{s}_n$ and all inducing states $\bar{\mathbf{s}}$. Note that $[\mathbf{K}_{NM}]_{nm} = K(\mathbf{s}_n, \bar{\mathbf{s}}_m)$.

A standard Gaussian prior is placed on the imaginary subgoals $\bar{\mathbf{g}}$:
\begin{equation}
\label{eq:gp4_appendix} 
p(\bar{\mathbf{g}}|\bar{\mathbf{s}}) = \mathcal{N}(\bar{\mathbf{g}}|0, \mathbf{K}_M).
\end{equation}

Using Bayes' rule on Eq. \ref{eq:gp3_appendix} and Eq. \ref{eq:gp4_appendix}, the posterior distribution over the imaginary subgoals $\bar{\mathbf{g}}$ given the observed data $\mathbf{D}_h=(\mathbf{s}, \mathbf{g})$ and inducing states $\bar{\mathbf{s}}$ can be derived as:
\begin{equation}
\label{eq:gp5_appendix} 
\begin{split}
p(\bar{\mathbf{g}}|\mathbf{D}_h, \bar{\mathbf{s}}) &\propto p(\mathbf{g}|\mathbf{s}, \bar{\mathbf{s}}, \bar{\mathbf{g}}) p(\bar{\mathbf{g}}|\bar{\mathbf{s}}) \\
&= \mathcal{N}\!\bigl(\bar{\mathbf{g}} \mid
\mathbf{K}_{M}\mathbf{Q}_M^{-1}\mathbf{K}_{MN}(\Lambda + \sigma^2\mathbf{I})^{-1}\mathbf{g}, \\
&\qquad\quad \mathbf{K}_M\mathbf{Q}_M^{-1}\mathbf{K}_M\bigr).
\end{split}
\end{equation}
where $\mathbf{Q}_M = \mathbf{K}_M + \mathbf{K}_{MN}(\Lambda + \sigma^2\mathbf{I})^{-1}\mathbf{K}_{NM}$. Note the inversion now involves $M \times M$ matrices, making computation feasible ($\mathcal{O}(NM^2)$ or $\mathcal{O}(M^3)$ depending on the exact method).

Finally, given a new state $\mathbf{s}_*$, the predictive distribution of the corresponding subgoal $\mathbf{g}_*$ is obtained by integrating the approximate likelihood (Eq. \ref{eq:gp2_appendix} applied to $\mathbf{s}_*$) against the posterior (Eq. \ref{eq:gp5_appendix}):
\begin{equation}
\label{eq:gppredict_appendix} 
\begin{split}
p(\mathbf{g}_*|\mathbf{s}_*, \mathbf{D}_h, \bar{\mathbf{s}})
&= \int d \bar{\mathbf{g}} \, p(\mathbf{g}_*|\mathbf{s}_*, \bar{\mathbf{s}}, \bar{\mathbf{g}})\,
p(\bar{\mathbf{g}}|\mathbf{D}_h, \bar{\mathbf{s}}) \\
&= \mathcal{N}\!\bigl(\mathbf{g}_* \mid \mu_*, \sigma_*^2\bigr).
\end{split}
\end{equation}
where the predictive mean $\mu_*$ and variance $\sigma_*^2$ are:
\begin{equation}
\mu_* = \mathbf{k}_*^\top\mathbf{Q}_M^{-1}\mathbf{K}_{MN}(\Lambda + \sigma^2\mathbf{I})^{-1}\mathbf{g}
\label{eq:mu_star_appendix}
\end{equation}
and
\begin{equation}
\sigma_*^2 = K_{**} - \mathbf{k}_*^\top(\mathbf{K}_M^{-1} - \mathbf{Q}_M^{-1})\mathbf{k}_* + \sigma^2.
\label{eq:sigma_star_squared_appendix} 
\end{equation}
Here, $\mathbf{k}_*$ is the vector $[\mathbf{k}_*]_m = K(\mathbf{s}_*, \bar{\mathbf{s}}_m)$ and $K_{**} = K(\mathbf{s}_*, \mathbf{s}_*)$.

The inducing states $\bar{\mathbf{s}}$ and the GP hyperparameters $\theta_{gp} = \{\gamma, \ell, \sigma \}$ are typically optimized by maximizing the marginal likelihood (evidence) $p(\mathbf{g}|\mathbf{s}, \bar{\mathbf{s}})$, which can also be computed efficiently using the inducing points. This optimization encourages the inducing states to adaptively summarize the data distribution in the state space.

\subsection{Implementation}
We implement the two-layer hierarchical policy network following the architecture of the HRAC \cite{ZhangG0H020}, which uses TD3 \cite{fujimoto2018addressing} as the foundational RL algorithm for both the high and low levels. 

Following the parameterization of \citet{ho2020denoising}, the diffusion policies are implemented as an MLP-based conditional diffusion model, which is a residual model, \ie, $\epsilon_\theta(g^i,s,i)$ and $\epsilon_\theta(a^i,s,g,i)$ respectively, where $i$ is the previous diffusion time step, $s$ is the state condition, and $g$ is the subgoal condition. $\epsilon_\theta$ is implemented as a 3-layer MLP with 256 hidden units. The inputs to \(\epsilon_\theta\) comprise the concatenated elements of either the low-level or high-level action from the previous diffusion step, the current state, the sinusoidal positional embedding of time step \(i\), and the current subgoal if it is a high-level policy. We provide further implementation details used for our experiments in Table \ref{table:impl}.

\begin{algorithm}[!t]
  \caption{HIDI} 
  \label{alg:hidi} 
  \begin{algorithmic}[1] 
    \STATE \textbf{Input:} Higher-level actor $\pi^{h}_{\theta_h}$, lower-level actor $\pi^{l}_{\theta_l}$, critics $Q^h$, $Q^l$, goal transition $h(\cdot)$, higher-level frequency $k$, training episodes $N$.
    \FOR{$n = 1$ \textbf{to} $N$} 
      \STATE Sample initial state $s_0$ 
      \STATE $t \gets 0$
      \REPEAT 
        \IF{$t \equiv 0 \ (\text{mod} \ k)$} 
          \STATE Sample subgoal via Eq. \ref{eq:subgoal_selection_main}
        \ELSE 
          \STATE Compute $g_t = h(g_{t-1}, s_{t-1}, s_t)$
        \ENDIF 
        \STATE Sample action $a_t \sim \pi^{l}_{\theta_l}(a|s_t, g_t)$
        \STATE Execute $a_t$, observe $s_{t+1} \sim \mathcal{P}(s|s_t, a_t)$
        \STATE Compute intrinsic reward $r_t \sim \mathcal{R}(r|s_t, g_t, a_t)$
        \STATE Store transition $(s_{t-1}, g_{t-1}, a_t, r_t, s_t, g_t)$
        \STATE Sample \texttt{done} signal
        \STATE $t \gets t + 1$
      \UNTIL{\texttt{done} is \texttt{true}} 
      
      \IF{Training higher-level policy $\pi^{h}_{\theta_h}$}
        \STATE Sample relabeled experience $(s_t, \tilde{g}_t, \sum r_{t:t+k-1}, s_{t+k})$
        \STATE Update $\pi^{h}_{\theta_h}$ and GP hyperparameters via Eq.~\ref{eq:lossfn}
        \STATE Update critic $Q^h$
      \ENDIF 
      \IF{Training low-level policy $\pi^{l}_{\theta_l}$}
        \STATE Sample experience $(s_t, a_t, g_t, r_t, s_{t+1})$
        \STATE Update $\pi^{l}_{\theta_l}$ and critic $Q^l$
      \ENDIF 
    \ENDFOR 
  \end{algorithmic} 
\end{algorithm}

\subsection{Algorithm}

We provide Algorithm \ref{alg:hidi} to show the training procedure of HIDI. 

\begin{figure*}[!t]
	\centering
	\begin{subfigure}{0.2\linewidth}
		\captionsetup{skip=0pt, position=below}
		\includegraphics[width=\linewidth]{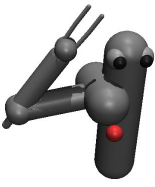}
		\caption{Reacher}    
	\end{subfigure}
	\begin{subfigure}{0.29\linewidth}
		\includegraphics[width=\linewidth]{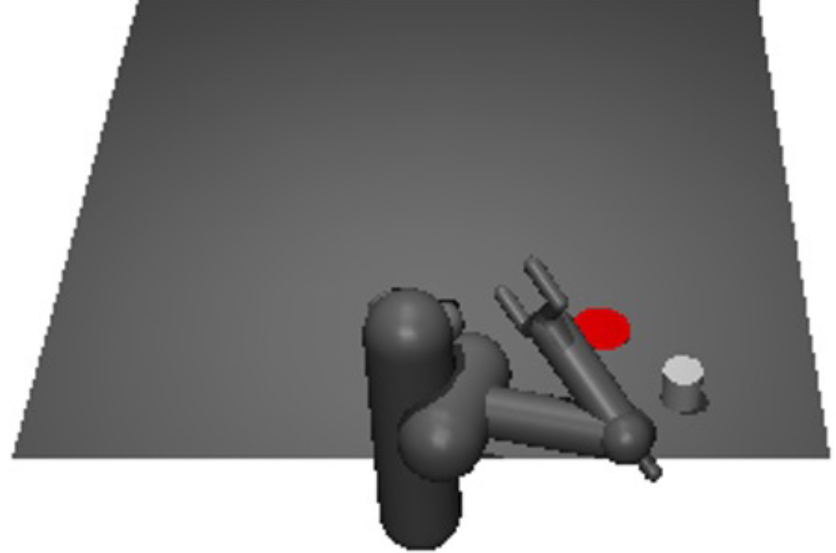}
		\caption{Pusher}
	\end{subfigure}
	\begin{subfigure}{0.25\linewidth}
		\includegraphics[width=\linewidth]{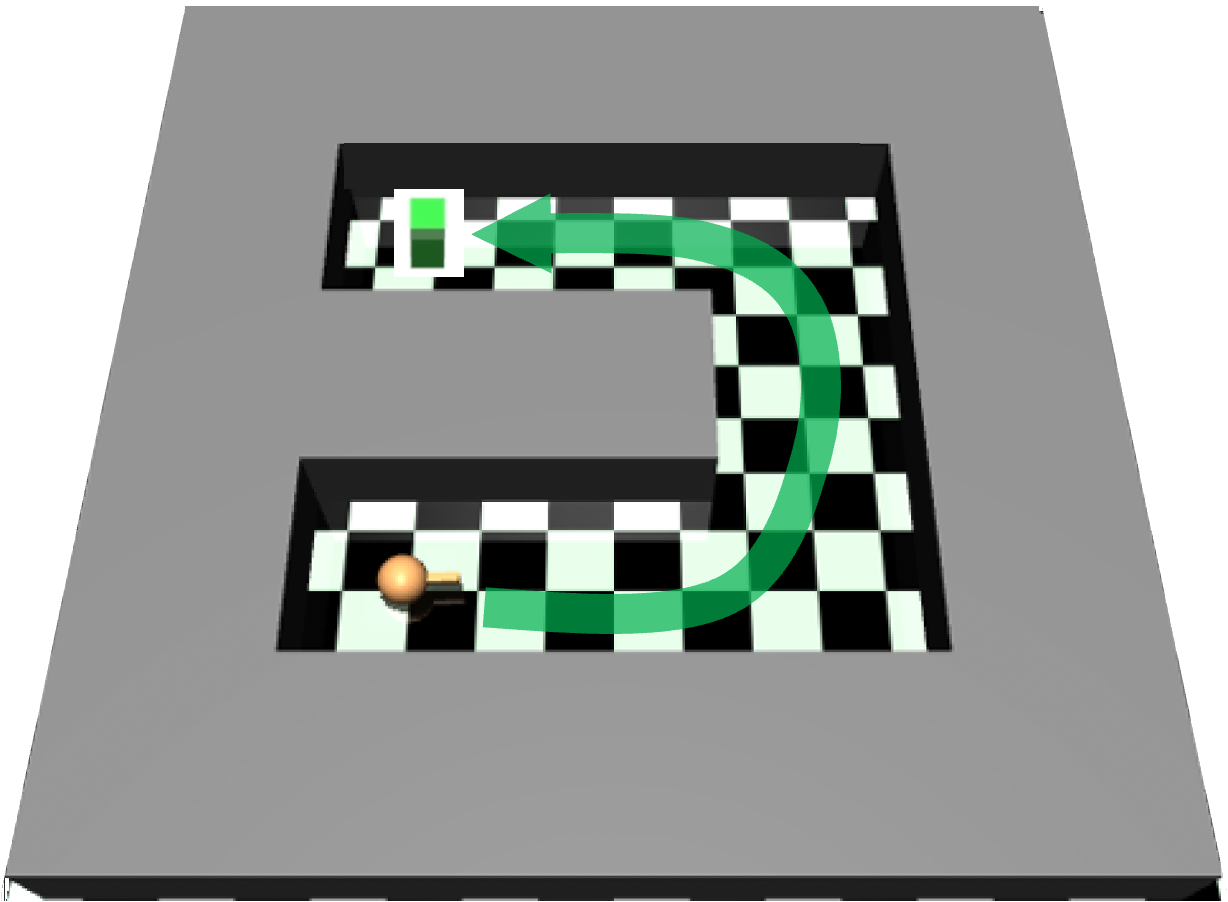}
		\caption{Point Maze}
	\end{subfigure}
	\begin{subfigure}{0.22\linewidth}
		\includegraphics[width=\linewidth]{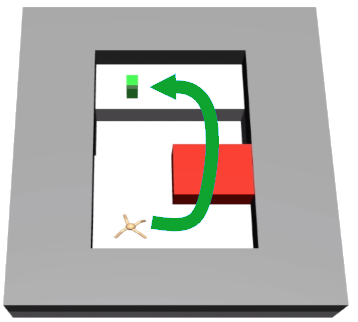}
		\caption{Ant Fall}
	\end{subfigure}\\
	\begin{subfigure}{0.2\linewidth}

		\includegraphics[width=\linewidth,height=2.5cm]{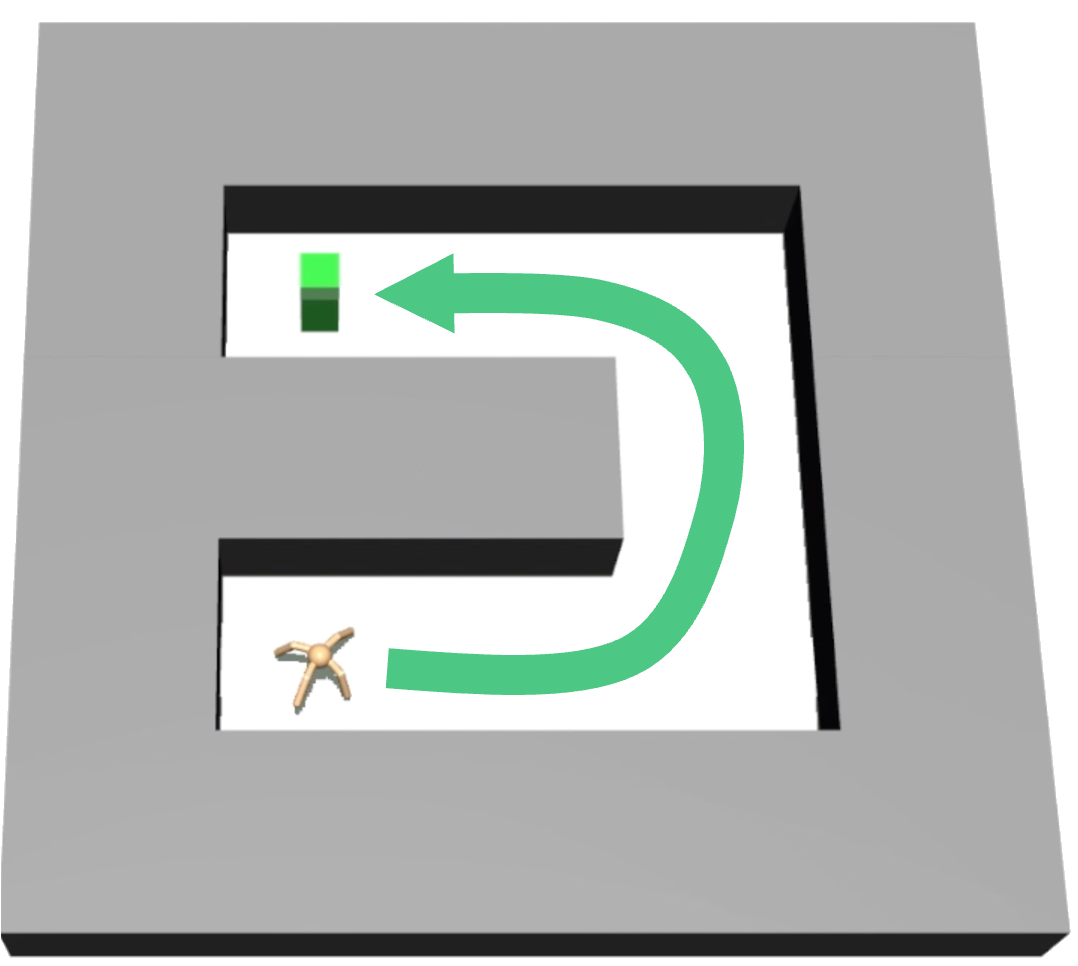}
		\caption{Ant Maze (U)}
	\end{subfigure}
	\begin{subfigure}{0.56\linewidth}
		\includegraphics[width=\linewidth,height=2.5cm]{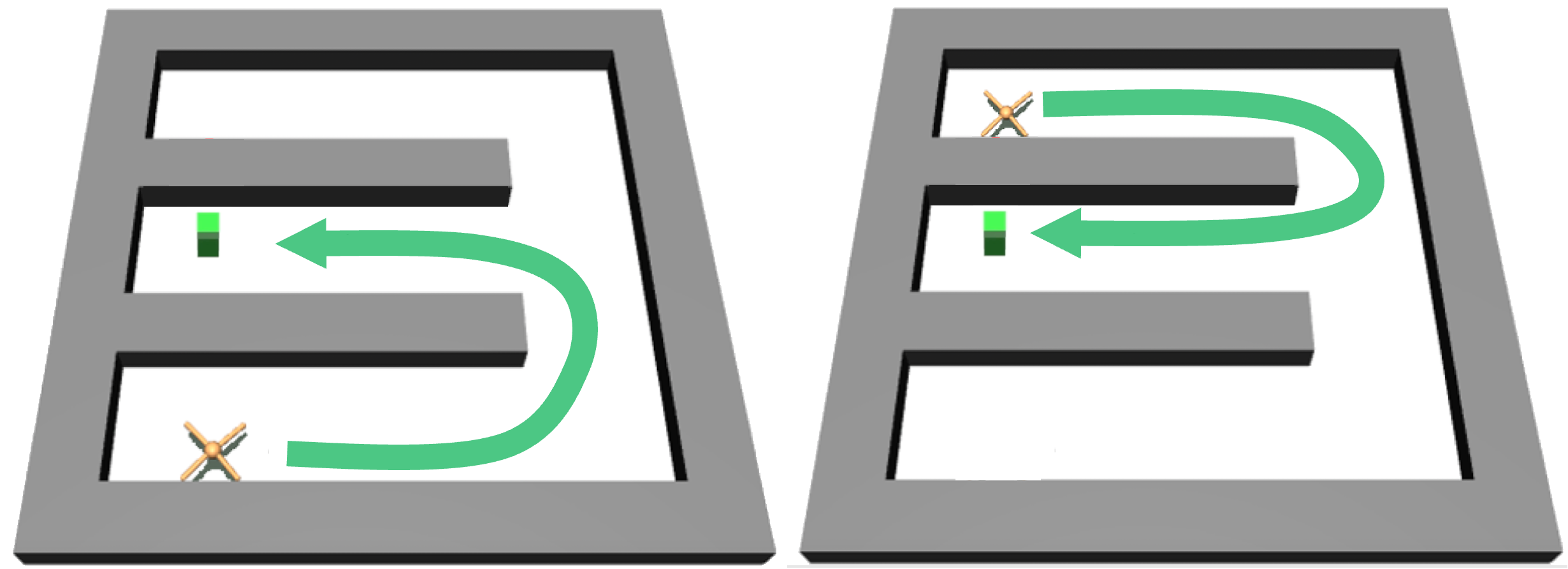}
		\caption{Ant Maze (W)}
	\end{subfigure}
	\begin{subfigure}{0.20\linewidth}
		\includegraphics[width=\linewidth,height=2.5cm]{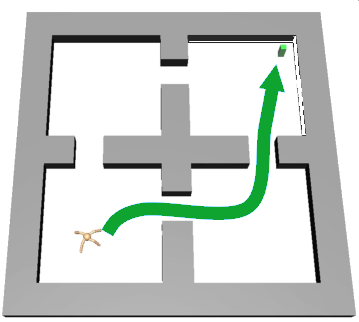}
		\caption{Ant FourRooms}
	\end{subfigure}
	\caption{Environments used in our experiments. }
	\label{fig:envs}
\end{figure*}

\subsection{Environments} 
\label{sec:envs}
Our experimental evaluation encompasses a diverse set of long-horizon continuous control tasks facilitated by the MuJoCo simulator \citep{todorov2012mujoco}, which are widely adopted in the HRL community. The environments selected for testing our framework include:
\begin{itemize}
	\item \textbf{Reacher:} This task entails utilizing a robotic arm to reach a specified target position with its end-effector.
	\item \textbf{Pusher:} A robotic arm must push a puck-shaped object on a plane to a designated goal position.
	\item \textbf{Point Maze:} A simulation ball starts in the bottom left corner of a ``$\	\supset$''-shaped maze, aiming to reach the top left corner.
	\item \textbf{Ant Maze (U-shape):} A simulated ant starts in the bottom left of a ``$\	\supset$''-shaped maze, targeting the top left corner.
	\item \textbf{Ant Maze (W-shape):} A simulated ant starts at a random position within a ``$\exists$''-shaped maze, aiming for the middle left corner.
	\item \textbf{Ant Fall}: This task introduces three-dimensional navigation. The agent begins on a platform elevated by four units, with the target positioned across a chasm that it cannot traverse unaided. To reach the target, the agent must push a block into the chasm and then ascend onto it before proceeding to the target location.
	\item \textbf{Ant FourRooms}: In this task, the agent must navigate from one room to another to achieve an external goal. The environment features an expanded maze structure measuring 18 $\times$ 18.
	\item \textbf{Variants}: Following \citep{ZhangG0H020, kim2021landmark}, we adopt a variant with environmental stochasticity for Ant Maze (U-shape), Ant Fall, and Ant FourRooms by adding Gaussian noise with a standard deviation of $\sigma = 0.05$ to the $(x, y)$ positions of the ant robot at each step. Additionally, we adopt another variant labeled `Image' for the large-scale Ant FourRooms environment. In this variant, observations are low-resolution images formed by zeroing out the $(x, y)$ coordinates and appending a $5 \times 5 \times 3$ top-down view of the environment, as described in \cite{NachumGLL19, li2020learning}.
\end{itemize}

We evaluate HIDI and all the baselines under two reward shaping paradigms: dense and sparse. In the dense setting, rewards are computed as the negative L2 distance from the current state to the target position within the goal space, whilst the sparse rewards are set to 0 for distances to the target below a certain threshold, otherwise -1. Maze tasks adopt a 2-dimensional goal space for the agent's $(x, y)$ position, adhering to existing works \citep{ZhangG0H020,kim2021landmark}. For Reacher, a 3-dimensional goal space is utilized to represent the end-effector's $(x, y, z)$ position, while Pusher employs a 6-dimensional space, including the 3D position of the object. Relative subgoal scheme is applied in Maze tasks, with absolute scheme used for Reacher and Pusher. 

\subsection{Additional Results}
Figure~\ref{fig:sgs} shows the generated subgoals and reached subgoals of HIDI and compared baselines in Ant Maze (W-shape, sparse) with the same starting location. HIDI generates reasonable subgoals for the lower level to accomplish, as evidenced by the minimal divergence between the generated and achieved subgoals which in turn provides a stable learning signal for the low-level policy. In contrast, the subgoals generated by HIRO are unachievable and cannot guide the agent towards the final target. Subgoals from HRAC frequently get stuck in local minima due to its local adjacency constraint. While HIGL and SAGA demonstrate improvement in constraining subgoals locally while escaping local optima, the high-level policy is inadequately compatible with the low-level skills. Specifically, the increasing gap between the generated subgoal and reached subgoal leads to inferior performance compared to HIDI.

We qualitatively study the learned inducing states in Fig. \ref{fig:ablation3} (Right), where the states of the complete set of training batch (100) and the final learned inducing states (16) are visualized. Note, only the $x,y$ coordinates are selected from the state space in both cases for visualization purpose. We can observe that with a significantly smaller number of training data, the inducing points capture the gist of the complete training data by adapting to cover more critical regions of the state space, \eg, the turning points in the Ant Maze (W-shape) task.  

\begin{table*}[!t]
	\centering
	\begin{small}
		\begin{tabular}{l|c|c|c|c|c}
			\hline
			& HIDI & SAGA & HIGL & HRAC & HIRO \\
			\hline
			$\Delta$ & \textbf{0.82$\pm$0.08}
			& 0.95$\pm$0.13
			& 1.57$\pm$0.05 
			& 1.72$\pm$0.06 
			& 10.90$\pm$ 2.04 \\
			\hline
			\toprule[1pt]
		\end{tabular}
	\end{small}
	\caption{The distance between generated subgoals and the reached subgoals, \ie, the final state of k-step low-level roll-out, averaged over 10 randomly seeded trials with standard error. }
	\label{table:distance}
\end{table*}

\begin{figure}[!t]
	\centering
	\begin{subfigure}{0.32\linewidth}
		\captionsetup{skip=0pt, position=below}
		\includegraphics[width=\linewidth]{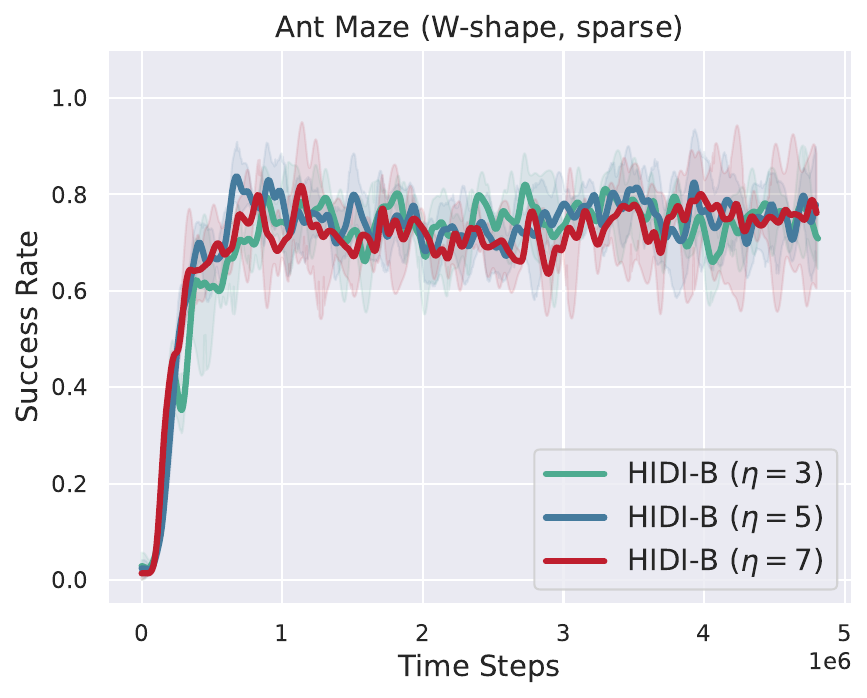}
	\end{subfigure}
	\begin{subfigure}{0.32\linewidth}
		\captionsetup{skip=0pt, position=below}
		\includegraphics[width=\linewidth]{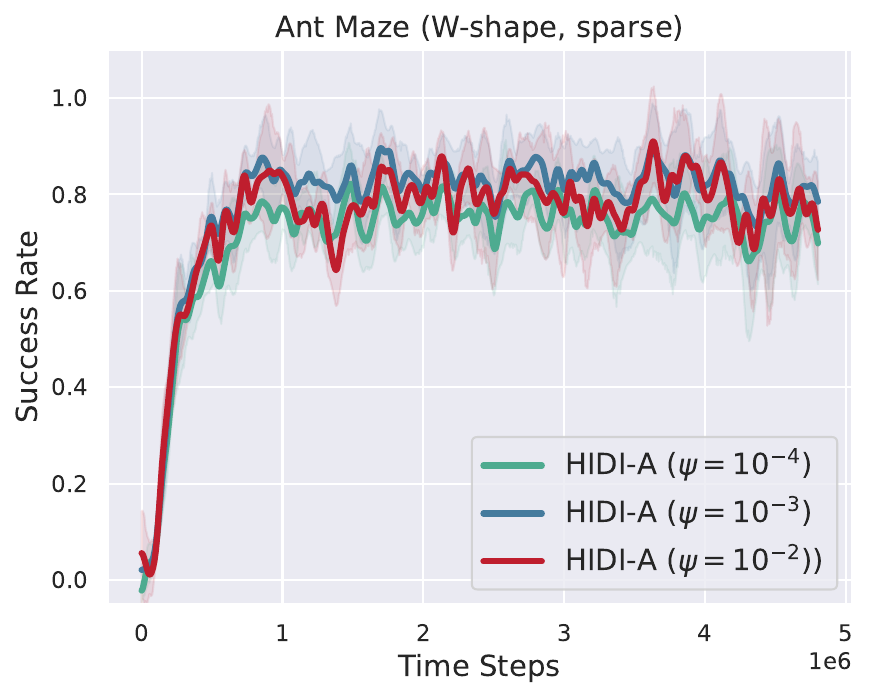}
	\end{subfigure}
	\begin{subfigure}{0.32\linewidth}
		\centering
		\captionsetup{skip=0pt, justification=centering}
		\raisebox{0.2cm}{  
			\hspace{0.02\linewidth}
			\includegraphics[width=0.98\linewidth,height=3.4cm]{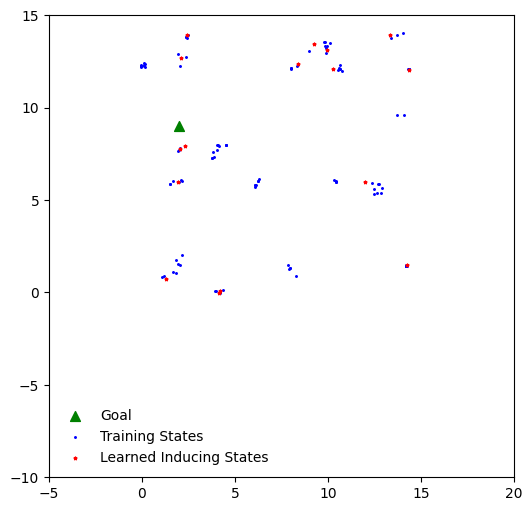}
		}
	\end{subfigure}
	\caption{(Left) Impact of $\eta$, which balances the diffusion objective and RL objective. (Middle) Impact of $\psi$, which adjusts the influence of GP prior in learning the distribution of diffusional subgoals. (Right) Visualization of the learned inducing states (2D coordinates) compared with the complete training data.}
	\label{fig:ablation3}
\end{figure}

\begin{table}[t]
	\centering
	\scalebox{0.85}{%
		\begin{tabular}{ll|c|c|c|c|c|c}
			\hline
			\multicolumn{2}{c|}{\textbf{Tasks}} & \textbf{HIDI} & \textbf{HLPS} & \textbf{SAGA} & \textbf{HIGL} & \textbf{HRAC} & \textbf{HIRO} \\
			\hline
			\multirow{8}{*}{\rotatebox[origin=c]{90}{\textbf{Sparse}}}
			& Reacher                 & \textbf{0.95$\pm$0.01} & 0.54$\pm$0.14 & 0.80$\pm$0.05 & 0.89$\pm$0.01 & 0.74$\pm$0.10 & 0.76$\pm$0.02 \\
			& Pusher                  & \textbf{0.78$\pm$0.02} & 0.62$\pm$0.12 & 0.20$\pm$0.12 & 0.31$\pm$0.10 & 0.17$\pm$0.06 & 0.19$\pm$0.06 \\
			& Point Maze              & \textbf{1.00$\pm$0.00} & 0.99$\pm$0.00 & 0.99$\pm$0.00 & 0.71$\pm$0.20 & 0.99$\pm$0.00 & 0.89$\pm$0.10 \\
			& Ant Maze (U)            & \textbf{0.91$\pm$0.03} & 0.83$\pm$0.02 & 0.82$\pm$0.02 & 0.52$\pm$0.05 & 0.80$\pm$0.03 & 0.72$\pm$0.03 \\
			& Ant Maze (W)            & \textbf{0.87$\pm$0.02} & 0.83$\pm$0.02 & 0.70$\pm$0.03 & 0.70$\pm$0.03 & 0.51$\pm$0.17 & 0.59$\pm$0.03 \\
			& Stoch. Ant Maze (U)     & \textbf{0.91$\pm$0.02} & 0.80$\pm$0.05 & 0.81$\pm$0.01 & 0.75$\pm$0.03 & 0.73$\pm$0.02 & 0.61$\pm$0.03 \\
			& Stoch. Ant Fall  & \textbf{0.84$\pm$0.02} & 0.78$\pm$0.04 & 0.48$\pm$0.05 & 0.52$\pm$0.02 & 0.25$\pm$0.03 & 0.28$\pm$0.03 \\
			& Stoch. Ant FourRooms (Img.)    & \textbf{0.64$\pm$0.03} & 0.55$\pm$0.05 & 0.32$\pm$0.03 & 0.31$\pm$0.02 & 0.00$\pm$0.00 & 0.00$\pm$0.00 \\
			\hline
			\multirow{4}{*}{\rotatebox[origin=c]{90}{\textbf{Dense}}}
			& Point Maze              & \textbf{1.00$\pm$0.00} & 1.00$\pm$0.00 & 0.94$\pm$0.04 & 0.98$\pm$0.02 & 0.99$\pm$0.00 & 0.81$\pm$0.19 \\
			& Ant Maze (U)            & \textbf{0.88$\pm$0.01} & 0.83$\pm$0.01 & 0.80$\pm$0.04 & 0.83$\pm$0.07 & 0.76$\pm$0.04 & 0.75$\pm$0.07 \\
			& Ant Maze (W)            & \textbf{0.88$\pm$0.05} & 0.80$\pm$0.02 & 0.68$\pm$0.03 & 0.78$\pm$0.04 & 0.75$\pm$0.07 & 0.50$\pm$0.04 \\
			& Stoch. Ant Maze (U)     & \textbf{0.92$\pm$0.01} & 0.88$\pm$0.03 & 0.87$\pm$0.03 & 0.82$\pm$0.03 & 0.70$\pm$0.04 & 0.80$\pm$0.03 \\
			\hline
		\end{tabular}
	}
	\vspace{0.2cm}
	\caption{Final performance of the policy obtained after 5M steps of training with sparse rewards, averaged over 10 randomly seeded trials with standard error. }
	\label{table:combined-grouped}
\end{table}

\begin{table}
	\centering
	\begin{tabular}{llc}
		Module & Parameter & Value \\
		\hline
		\multirow{10}{*}{\rotatebox[origin=c]{90}{Diffusional Policy}} & Number of hidden layers & 1 \\
		& Number of hidden units & 256 \\
		& Nonlinearity,  & Mish  \\
		& Optimizer & Adam \\
		& Learning rate & $10^{-4}$ \\
		& Hyperparameter for RL objective $\eta$ & $5$ \\
		& Number of diffusion steps $N$ & $5$ \\
		& GP loss weight & $10^{-3}$ \\
		& GP learning rate & $3\times 10^{-4}$ \\
		& Subgoal selection probability $\varepsilon$ & 0.1 \\
		& Number of inducing states & $16$ \\
		\hline
		\multirow{12}{*}{\rotatebox[origin=c]{90}{Two-layer HRL, critic networks}} &  Number of hidden layers & 1 \\
		& Number of hidden units per layer & 300  \\
		& Nonlinearity & ReLU   \\
		& Optimizer & Adam \\
		& Learning rate, critic & $10^{-3}$ \\
		& Batch size, high level & 100 \\
		& Batch size, low level & 128 \\
		& Replay buffer size & $2\times 10^{5}$ \\
		& Random time steps & $5\times 5^{6}$\\
		& Subgoal frequency & 10\\
		& Reward scaling, high level & 0.1\\
		& Reward scaling, low level & 1.0\\
	\end{tabular}
	\caption{Network architecture and key hyperparameters of HIDI}
	\label{table:impl}
\end{table}

\begin{figure*}[!t]
	\centering
	\begin{subfigure}{0.32\linewidth}
		\includegraphics[width=\linewidth]{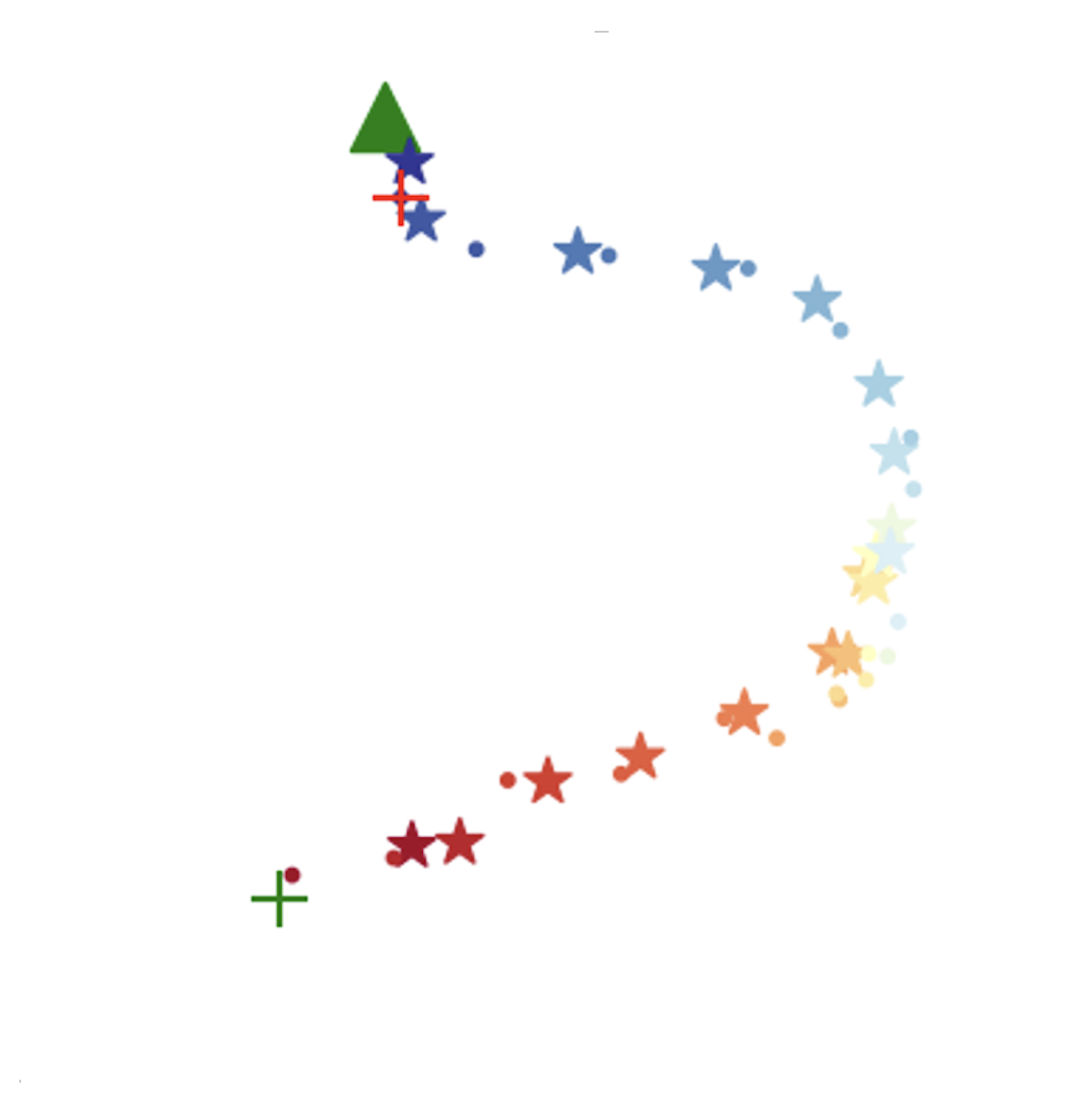}
		\caption{HIDI}    
	\end{subfigure}
	\begin{subfigure}{0.32\linewidth}
		\includegraphics[width=\linewidth]{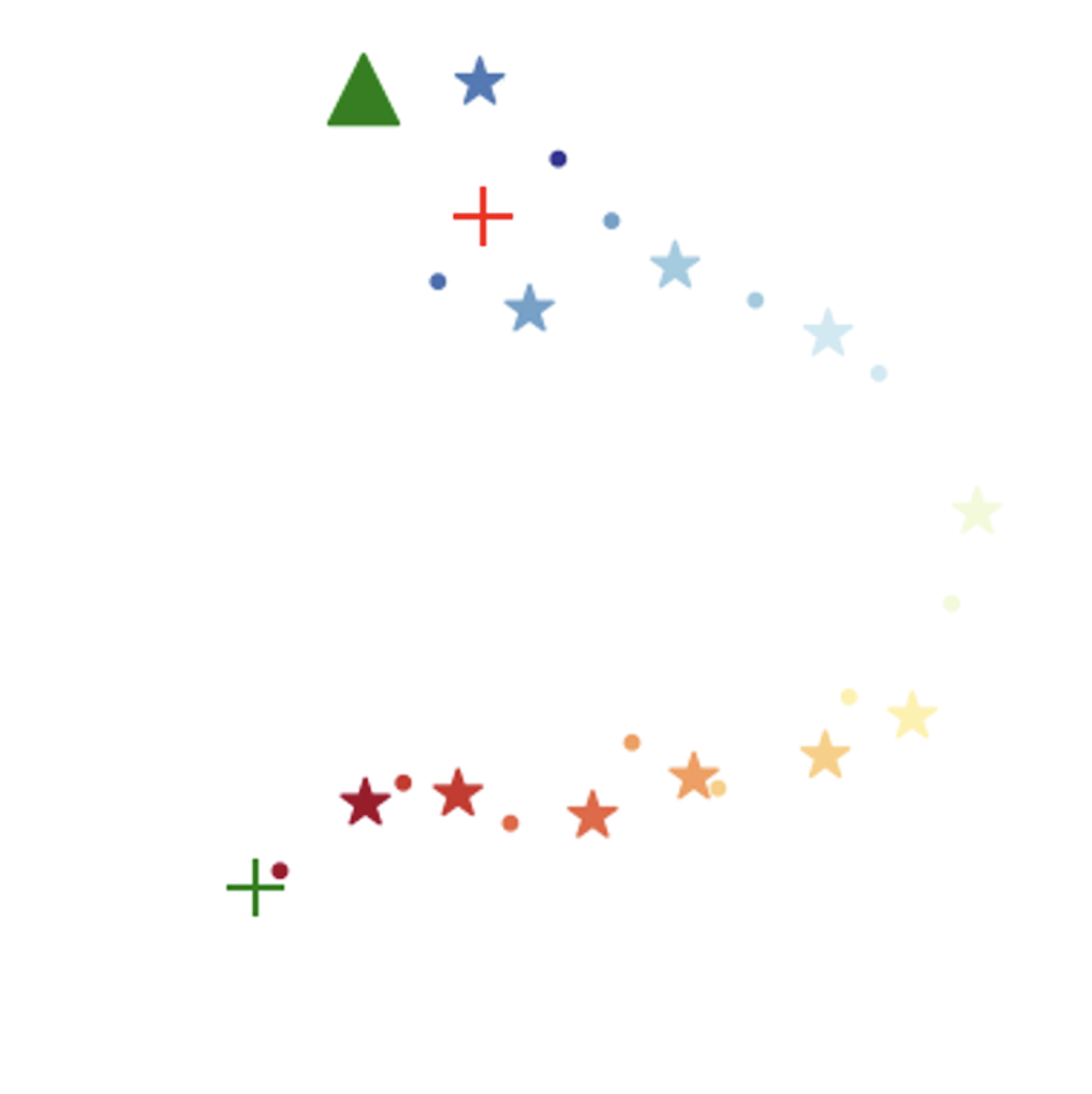}
		\caption{SAGA}
	\end{subfigure}
	\begin{subfigure}{0.32\linewidth}
		\includegraphics[width=\linewidth]{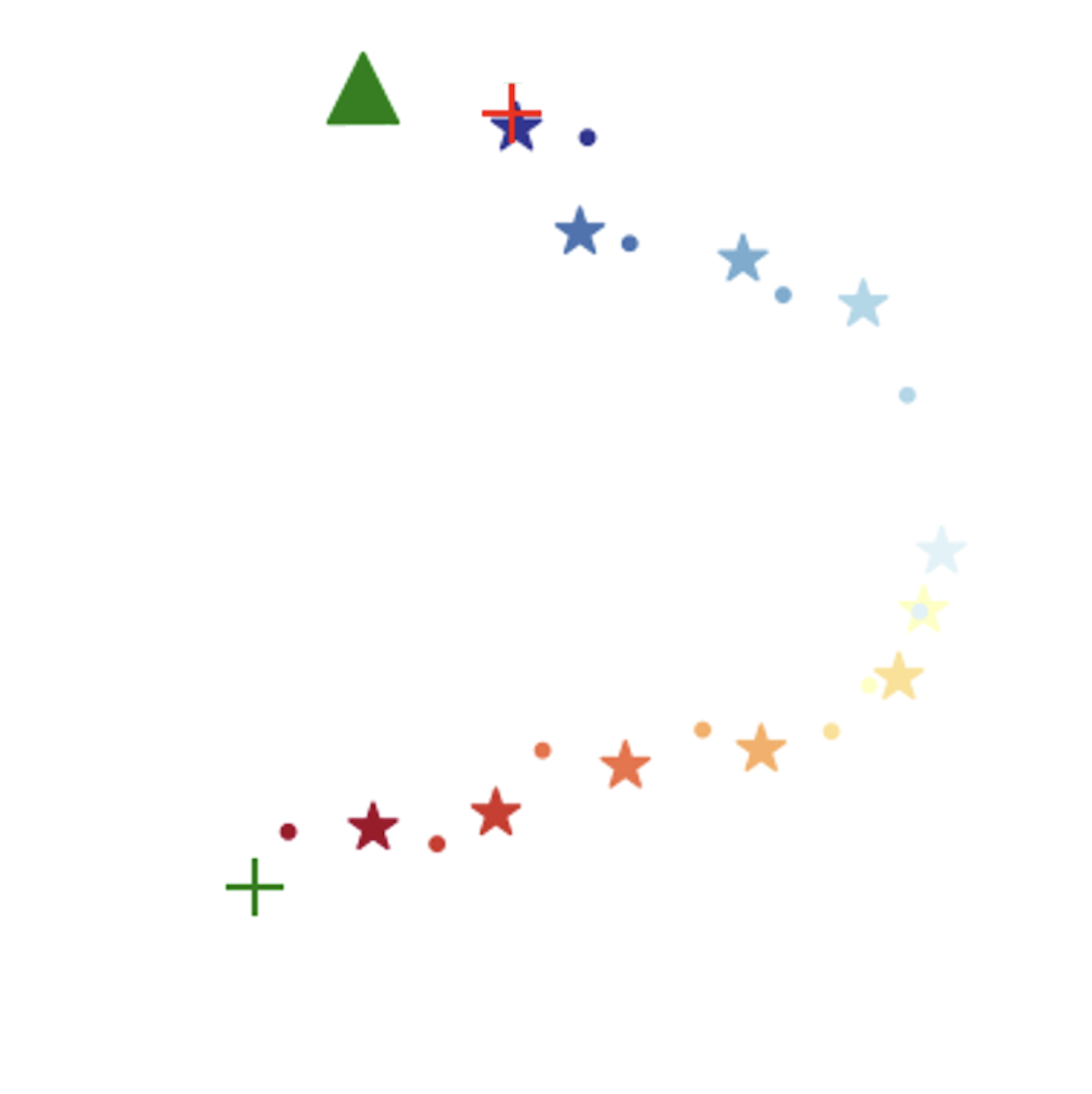}
		\caption{HIGL}
	\end{subfigure}\\
	\begin{subfigure}{0.32\linewidth}
		\includegraphics[width=\linewidth]{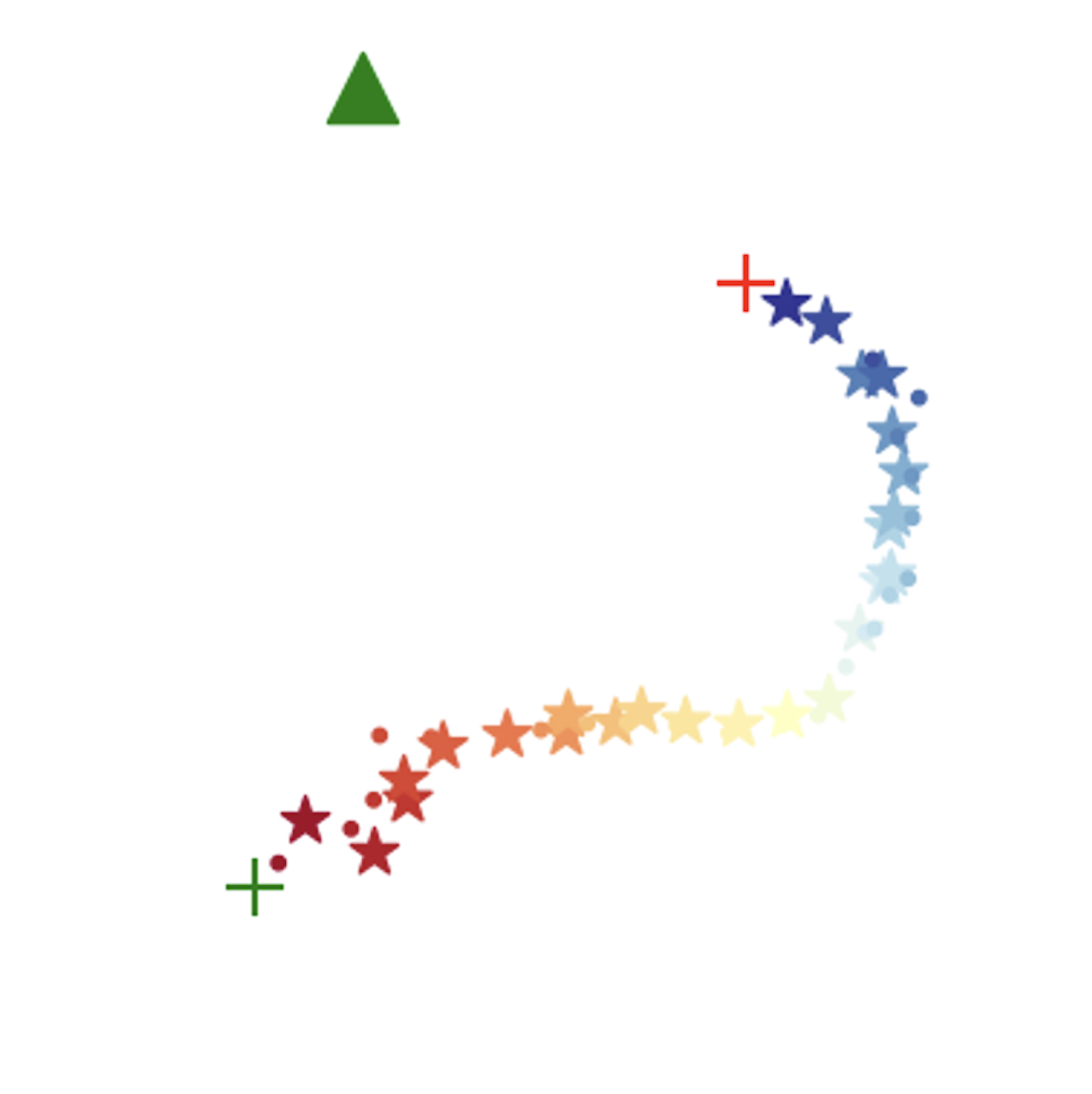}
		\caption{HRAC}
	\end{subfigure}
	\begin{subfigure}{0.32\linewidth}
		\includegraphics[width=\linewidth]{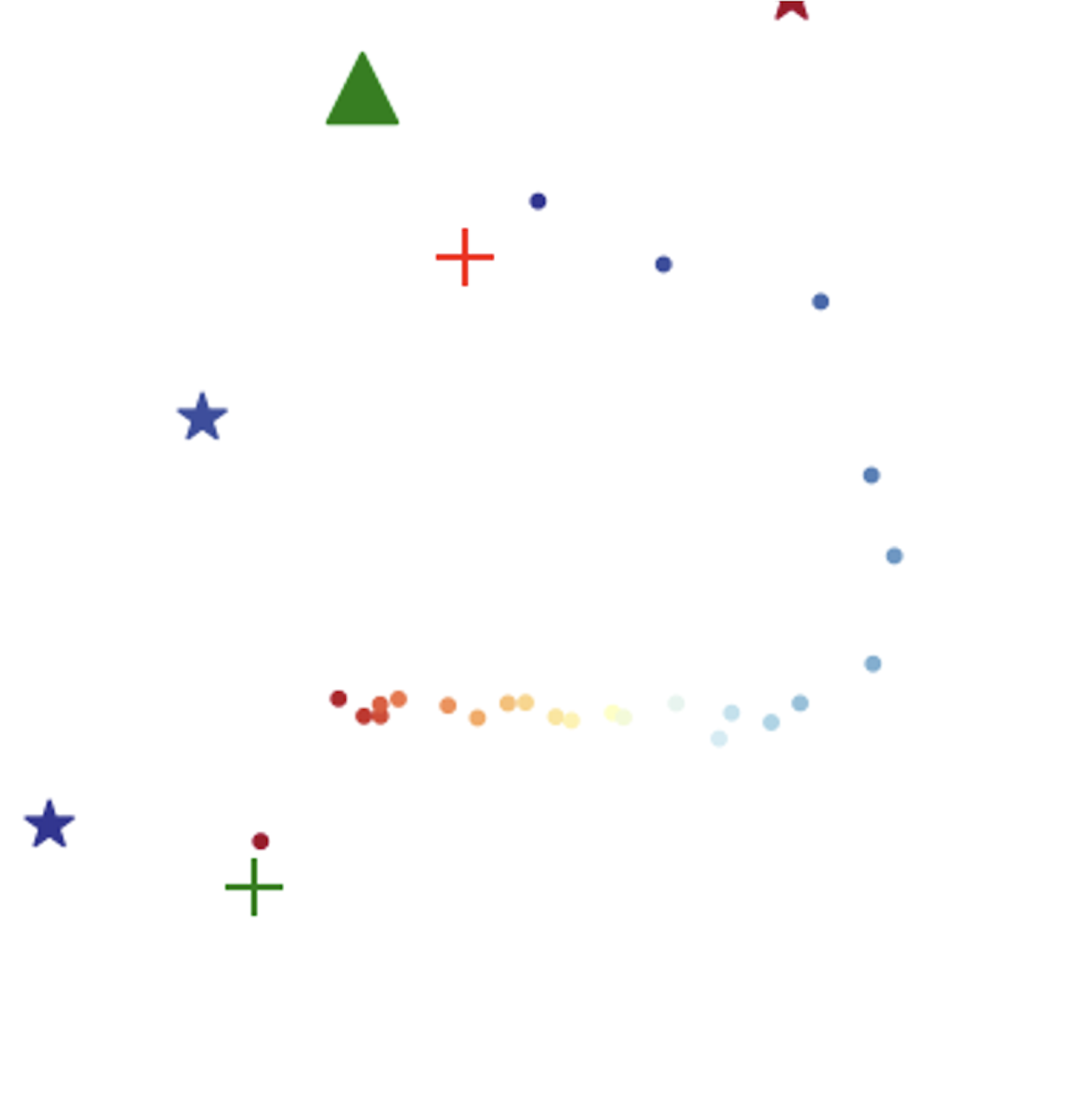}
		\caption{HIRO}
	\end{subfigure}
	\begin{subfigure}{0.12\linewidth}
		\includegraphics[width=\linewidth]{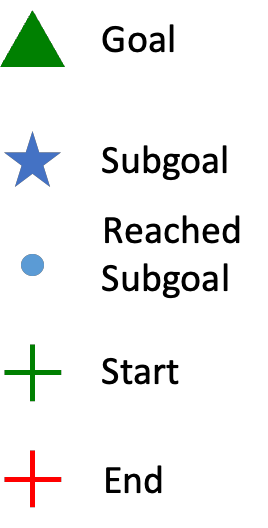} 
	\end{subfigure}
	\caption{Visualization of generated subgoals and reached subgoals of HIDI and compared baselines in Ant Maze (W-shape, sparse) with the same starting location. }
	\label{fig:sgs}
\end{figure*}

\end{document}